\documentclass{article} %
\usepackage{iclr2026_conference,times}

\usepackage{amsmath,amsfonts,bm}

\def\eqref#1{equation~\ref{#1}}

\def\1{\bm{1}}

\DeclareMathAlphabet{\mathsfit}{\encodingdefault}{\sfdefault}{m}{sl}
\SetMathAlphabet{\mathsfit}{bold}{\encodingdefault}{\sfdefault}{bx}{n}

\newcommand{\softmax}{\mathrm{softmax}}

\newcommand{\KL}{D_{\mathrm{KL}}}

\DeclareMathOperator*{\argmax}{arg\,max}
\DeclareMathOperator*{\argmin}{arg\,min}

\usepackage{xspace}
\usepackage{microtype}
\usepackage{graphicx}
\usepackage{subfigure}
\usepackage{booktabs} %

\usepackage{hyperref}

\usepackage{amsmath}
\usepackage{amssymb}
\usepackage{mathtools}
\usepackage{amsthm}
\usepackage{mathrsfs}
\usepackage{wrapfig}
\usepackage{multirow}

\usepackage[capitalize,noabbrev]{cleveref}

\usepackage{algorithm}
\usepackage{algpseudocode}

\usepackage{subcaption}     %
\usepackage{caption}

\usepackage{listings}
\usepackage{fontawesome}
\theoremstyle{plain}
\newtheorem{theorem}{Theorem}[section]
\newtheorem{proposition}[theorem]{Proposition}

\theoremstyle{definition}

\theoremstyle{remark}
\newtheorem{remark}[theorem]{Remark}

\newcommand{\Mat}{\boldsymbol}

\newcommand{\real}{\mathbb{R}}

\newcommand{\nat}{\mathbb{N}}

\newcommand{\wh}[1]{\widehat{#1}}
\newcommand{\wt}[1]{\widetilde{#1}}

\newcommand{\cL}{\mathcal{L}}
\newcommand{\V}{\mathcal{V}}

\newcommand{\name}{$\mathcal{\nabla}$-Reasoner\xspace}

\DeclareMathOperator{\Cat}{\mathrm{Cat}}
\DeclareMathOperator{\diag}{diag}

\DeclareMathOperator{\mean}{\mathbb{E}}
\DeclareMathOperator{\gauss}{\mathcal{N}}

\usepackage[textsize=tiny]{todonotes}

\title{$\nabla$-Reasoner: LLM Reasoning via Test-Time \\ Gradient Descent in Latent Space}

\author{
\hspace{-2mm}
Peihao Wang\textsuperscript{1}\thanks{Equal contribution.}~, Ruisi Cai\textsuperscript{1*}, Zhen Wang\textsuperscript{2}, Hongyuan Mei\textsuperscript{3},
Qiang Liu\textsuperscript{1}, \\
\textbf{Pan Li\textsuperscript{4}, Zhangyang Wang\textsuperscript{1}} \\
\textsuperscript{1}The University of Texas at Austin,
\textsuperscript{2}UC San Diego,
\textsuperscript{3}TTIC,
\textsuperscript{4}Georgia Tech \\
\hspace{-1mm}
\small{
\texttt{\{peihaowang,ruisi.cai,lqiang,atlaswang\}@utexas.edu,} \texttt{zhw085@ucsd.edu}
}\\
\small{
\texttt{hongyuan@ttic.edu,} \texttt{panli@gatech.edu}
}
}

\iclrfinalcopy %
\begin{document}

\maketitle
\vspace{-3em}
\begin{center}
\faGithub~ \url{https://github.com/VITA-Group/Nabla-Reasoner}
\end{center}

\begin{abstract}
Scaling inference-time compute for Large Language Models (LLMs) has unlocked unprecedented reasoning capabilities.
However, existing inference-time scaling methods typically rely on inefficient and suboptimal discrete search algorithms or trial-and-error prompting to improve the online policy.
In this paper, we propose \name, an iterative generation framework that integrates differentiable optimization over token logits into the decoding loop to refine the policy on the fly.
Our core component, Differentiable Textual Optimization (DTO), leverages gradient signals from both the LLM’s likelihood and a reward model to refine textual representations.
\name further incorporates rejection sampling and acceleration design to robustify and speed up decoding.
Theoretically, we show that performing inference-time gradient descent in the sample space to maximize reward is dual to aligning an LLM policy via KL-regularized reinforcement learning.
Empirically, \name achieves over 20\% accuracy improvement on a challenging mathematical reasoning benchmark, while reducing number of model calls by approximately 10-40\% compared to strong baselines.
Overall, our work introduces a paradigm shift from zeroth-order search to first-order optimization at test time, offering a cost-effective path to amplify LLM reasoning.
\end{abstract}

\section{Introduction}
\label{sec:intro}

Large Language Models (LLMs) have unlocked remarkable reasoning capabilities \citep{radford2018improving, radford2019language, brown2020language}, enabling machines to tackle challenges considered exclusive to human cognition, such as solving complex mathematical problems \citep{cobbe2021training, lewkowycz2022solving, uesato2022solving, lee2023teaching, yang2024leandojo} and executing long-horizon planning \citep{liu2023llmp, valmeekam2023planning, song2023llm}.
Such capabilities arise through large-scale pre-training on massive datasets, followed by careful post-training alignment \citep{wei2022emergent, ouyang2022training, guo2025deepseek}.
A prevailing observation has indicated that scaling both model size and training data leads to continual improvements in LLM reasoning ability \citep{kaplan2020scaling, hoffmann2022training}.

Nevertheless, recent empirical findings increasingly suggest that scaling inference-time computation can be also crucial and perhaps more cost-effective than expanding pretraining to further enhance reasoning and problem-solving abilities \citep{snell2024scaling}.
Chain-of-Thought (CoT) \citep{wei2022chain} demonstrates that prompting LLMs at the test time to generate longer sequences with intermediate reasoning steps significantly improves their reasoning accuracy.
Built on CoT, \citet{wang2022self} further scales the inference compute by sampling multiple reasoning chains and selecting the most consistent one, leading to enhanced performance.
More recently, inference-time scaling has been augmented with reward models to refine reasoning quality.
Notable approaches such as Tree-of-Thought (ToT) \citep{yao2024tree} and Reasoning-as-Planning (RAP) \citep{hao2023reasoning} cast LLM reasoning as a decision-making problem and employ strategic sampling algorithms to estimate the reward-to-go, thereby refining the sequential prediction policy at each decoding step.
Underlying these approaches are extensive prompting-based search procedures that traverse the sequence space, with the LLM serving as a guiding heuristic.
However, such approaches often struggle to adequately explore the sample space and thus become sensitive to sparse and noisy reward signals as reasoning chains grow longer and the search space expands exponentially.
Consequently, their performance tends to saturate even when inference-time computation is substantially increased.

\begin{wrapfigure}{r}{0.45\textwidth}
\centering
\vspace{-1.5em}
\includegraphics[width=\linewidth]{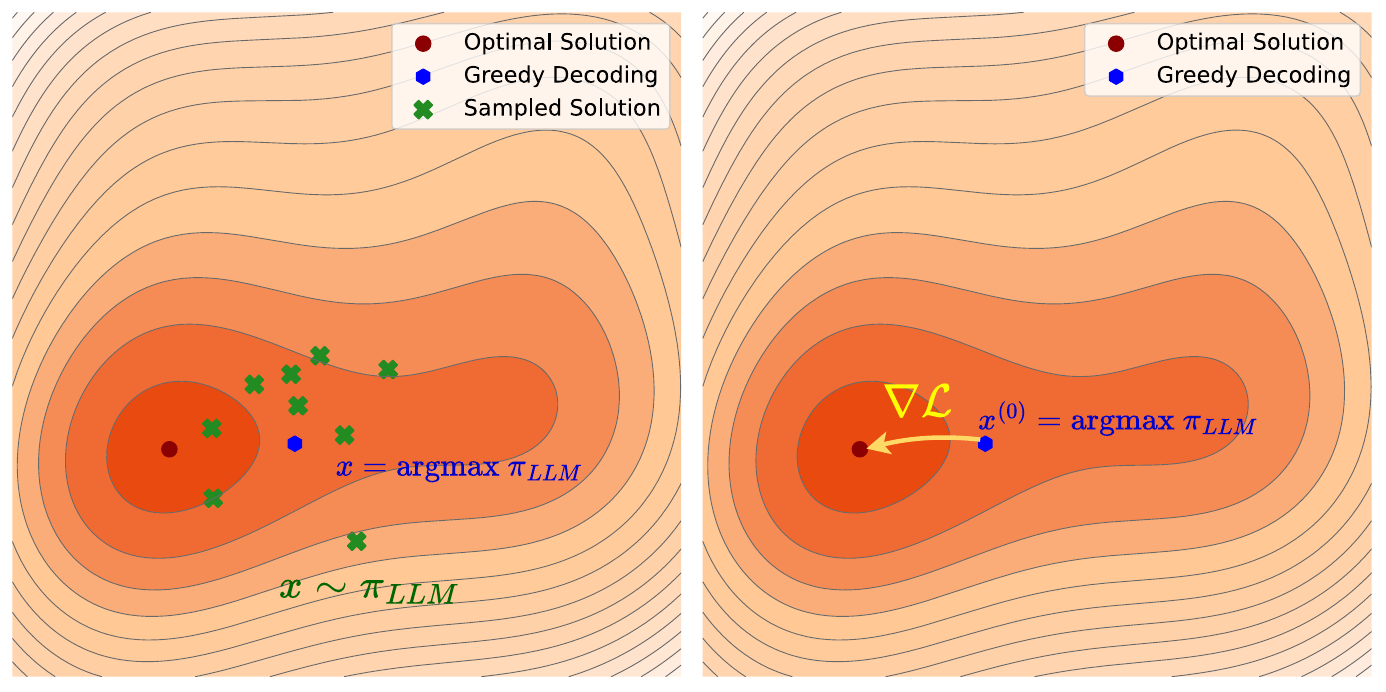}
\vspace{-2em}
\caption{LLM reasoning can be formulated as a maximization problem over the reward landscape. (left) Traditional inference-time scaling methods are zeroth-order methods, sampling numerous candidate responses and evaluating them to identify higher-quality solutions. (right) This paper introduces a first-order method for inference-time scaling, where reward gradients are directly leveraged to guide the search process toward highly rewarding responses. }
\label{fig:teaser}
\vspace{-1.5em}
\end{wrapfigure}
While existing methods fall into zeroth-order algorithms that rely solely on reward values, we note that first-order methods, providing directional guidance for optimization, can be even more effective in searching for optimal solutions, overcoming the sparsity of the reward landscape (see an intuitive comparison in Fig. \ref{fig:teaser}).
In fact, gradient information is readily available during the LLM reasoning process, as both the LLM and reward function can be differentiable.
In this paper, we introduce \name, a novel reasoning algorithm that applies inference-time gradient descent in the sample space to refine the outputs of a base policy prior to next-token prediction.
The overall pipeline follows an iterative decoding process.
At each step, the language model first generates a full completion together with its per-token logits, serving as the initial rollout.
The core component, termed Differentiable Textual Optimization (DTO), then refines these token logits via gradient descent.

DTO formulates the reasoning process as a continuous optimization problem over the reward landscape, directly leveraging gradients to refine textual representations. Specifically, DTO applies gradient descent to optimize the initial logit vectors from the base policy under an objective that combines the reward function with the sequence-level log-likelihood estimated by the language model. To enable end-to-end differentiability, we employ the straight-through estimator to map logit parameters into one-hot token vectors \citep{bengio2013estimating}.
In this formulation, the reward function provides directional signals that guide tokens toward high-reward regions, while the log-likelihood term regularizes the tuned sequences to remain fluent and consistent with the pre-trained LLM distribution \citep{hoang2017towards, qin2022cold, kumar2022gradient}.

After refining the logits with DTO, \name treats the optimized logits of the first token as an improved policy and samples the next-token prediction from this updated distribution. We further integrate \name with rejection sampling, which accepts the token drawn from the refined policy only if it can yield continuation with higher reward; otherwise, the method reverts to the initial choice.
Through iteratively interleaving decoding and refinement, \name scales inference-time reasoning by allocating additional computation to improve the LLM policy via gradient-based updates on the output space, efficiently backended by the parallel execution of transformer models.
To further increase decoding throughput, we introduce a set of acceleration strategies that selectively skip tokens unlikely to benefit from DTO and reuse rollouts shared among decoding steps.

Theoretically, we show that DTO enables bidirectional gradient propagation along the sequence, facilitating global modifications that are crucial for effective reasoning \citep{bachmann2024pitfalls, hao2023reasoning}.
Furthermore, we establish a close connection between DTO and RL algorithms \citep{schulman2017proximal, ouyang2022training, guo2025deepseek}.
We prove that sampling from an optimized LLM trained with RL is equivalent to directly drawing samples from the reference LLM and subsequently refining them through the gradient flow induced by DTO.
This insight provides a new theoretical perspective for test-time approaches for reasoning.

Empirically, \name significantly enhances the mathematical reasoning capabilities by 10-40\% across multiple models and benchmarks.
It consistently outperforms strong inference-time baselines such as Best-of-N and RAP \citep{hao2023reasoning}, while achieving accuracy on par with more costly training-based methods (e.g., GRPO).
We further show that \name scales compute more effectively: by leveraging parallelized execution of attention, it can utilize more compute per model forward pass. Henceforth, when comparing with sampling-only methods (e.g., BoN), \name achieves superior results while reducing the number of model calls by up to 40.2\%.

\section{Preliminaries}
\label{sec:prelim}

In this section, we formulate LLM reasoning as a decision-making problem, introducing the necessary notations and common approaches to address this problem along the way.

\paragraph{Notations.}
Let $\V = \{ \Mat{\delta}_i \in \real^{|\V|} : i \in [|\V|] \}$ be the vocabulary set, where $\Mat{\delta}_i$ is the $i$-th canonical basis and $|\V|$ is the vocabulary size.
We denote a sequence over this vocabulary as $\Mat{x} = [\Mat{x}_1, \cdots, \Mat{x}_{|\Mat{x}|}]$ where $\Mat{x}_l \in \V$ is the $l$-th token for every $i \in [|\Mat{x}|]$ and $|\Mat{x}|$ represents the length of the sequence.
We also denote $\Mat{x}_{\le i}$ as the subsequence up to and including the $i$-th token, expressed as $[\Mat{x}_1, \cdots, \Mat{x}_{i}]$.
The space of all sequences with finite length is given by $\V^* = \bigcup_{l \in \nat} \V^{l}$.

\paragraph{Language Models and Reward Models.}
In this paper, we focus on autoregressive language models \citep{radford2018improving, radford2019language, brown2020language}, denoted as $\pi_{LLM}: \V^{*} \rightarrow [0, 1]$.
The language model can characterize the conditional probability of a question-answer pair.
Given a pair of questions and answers $\Mat{x}, \Mat{y} \in \V^*$, the model estimates their likelihood by the following factorization: $\pi_{LLM}(\Mat{y} | \Mat{x}) = \prod_{i=1}^{|\Mat{y}|}\pi_{LLM}(\Mat{y}_i | \Mat{y}_{\le i-1}, \Mat{x})$.
We also denote $\Cat\left(\pi_{LLM}(\cdot | \Mat{y}_{\le i-1}, \Mat{x})\right) \in [0, 1]^{|\V|}$ as the categorical distribution of $\Mat{y}_i$ given the prefix $\Mat{y}_{\le i-1}$ and $\Mat{x}$.
In addition, we define a reward model as the function $r: \V^{*} \rightarrow \real$. 
$r(\Mat{y} | \Mat{x})$ evaluates the correctness of the response $\Mat{y}$ for question $\Mat{x}$.
In this work, we mainly focus on \textit{outcome reward} \citep{cobbe2021training}, which is often a sequence classifier, offering an overall score for the entire response sequence.
Our proposed method can also generalize to process reward \citep{lightman2023let}.

\paragraph{Reasoning as Decision Making.}
Reasoning with LLMs can be framed as a search algorithm that aims to identify a high-rewarding response: $\argmin_{\Mat{y} \in \V^*} -R(\Mat{y} | \Mat{x})$.
Due to the combinatorial nature of this optimization, directly finding the optimal solution is intractable, as the search space grows exponentially with the sequence length, i.e., $|\V|^{|\Mat{y}|}$.
In autoregressive decoding, this challenge reduces to making sequential decisions for the next token, each of which must ultimately contribute to minimizing $-R(\Mat{y} | \Mat{x})$.
This decision process is often formalized via a Bellman equation:
\begin{align}
\label{eqn:bellman}
\pi^{*}_{LLM}(\cdot | \Mat{y}_{\le i-1}, \Mat{x}) = \argmax_{\Mat{y}_i \in \V} \mean_{\Mat{y}_{\ge i+1} \sim \pi^{*}_{LLM}(\cdot | \Mat{y}_i, \Mat{y}_{\le i-1}, \Mat{x})} [r(\Mat{y}_{\le i-1}, \Mat{y}_i, \Mat{y}_{\ge i+1} | \Mat{x})],
\end{align}
where $\pi^{*}_{LLM}$ is a refined version of the original policy $\pi_{LLM}$.
The expected reward-to-go in Eq. \ref{eqn:bellman} is also known as the Q-function.
The recursive structure of this formulation implies that greedy decoding is not globally optimal, and identifying the optimal next-token prediction inherently requires look-ahead rollouts and backtracking \citep{yao2024tree, hao2023reasoning, besta2024graph}.

\paragraph{Existing Approaches.}
Current techniques tackling LLM reasoning via decision making can be broadly categorized into training-time and inference-time methods.
Training-time approaches include supervised fine-tuning (SFT) as well as model-free policy optimization techniques, such as \citet{schulman2017proximal, guo2025deepseek, rafailov2024direct}.
Our focus is on \textit{inference-time methods}, which aim to improve the decoding process of an LLM without additional training. 
These methods are typically model-based and value-based, seeking to refine an existing policy by directly solving the Bellman equation.
For example, Best-of-N (BoN) \citep{stiennon2020learning} tackles Eq. \ref{eqn:bellman} by sampling $N$ independent full trajectories from a base policy $\Mat{y}^{(1)}, \cdots, \Mat{y}^{(N)} \sim \pi_{LLM}(\cdot | \Mat{x})$, and selecting the one with the highest reward $\Mat{y}^* = \argmax_{\Mat{y} \in \{\Mat{y}^{(1)}, \cdots, \Mat{y}^{(N)}\}} r(\Mat{y} | \Mat{x})$.
More structured approaches, such as Tree-of-Thoughts (ToT) \citep{yao2024tree} and Reasoning-as-Planning (RAP) \citep{hao2023reasoning}, explore the solution space and approximate Q-functions stochastically on the fly through rollouts and recursive evaluation.

\begin{figure}[t]
\centering
\renewcommand{\algorithmicindent}{1.em}
\begin{minipage}[t]{0.48\textwidth}
\begin{algorithm}[H]
\caption{\name: Decoding with DTO}
\label{alg:decoding}
\begin{algorithmic}[1]
\Require Prompt $\Mat{x}$, language model $\pi_{LLM}$, reward model $r$, stop criteria $\mathrm{StopCriteria}(\cdot)$.
\Repeat
    \State $\Mat{y}, \Mat{z} \sim \pi_{LLM}(\cdot | \Mat{x})$
    \State $\wt{\Mat{z}} \gets \mathrm{DTO}(\Mat{x}, \Mat{z}, \pi_{LLM}, r)$.
    \State $\wt{\Mat{y}}_1 \sim \softmax(\wt{\Mat{z}}_1 / \tau)$.
    \If{$\wt{\Mat{y}}_1 \ne \Mat{y}_1$}
        \State $\wt{\Mat{y}}, \wt{\Mat{z}} \sim \pi_{LLM}(\cdot | \wt{\Mat{y}}_1, \Mat{x})$
        \If{$r(\wt{\Mat{y}}, \wt{\Mat{y}}_1 | \Mat{x}) > r(\Mat{y} | \Mat{x})$}
        \State $\Mat{x} \gets \mathrm{concat}[\Mat{x}, \wt{\Mat{y}}_1]$
        \Else
        \State $\Mat{x} \gets \mathrm{concat}[\Mat{x}, \Mat{y}_1]$
        \EndIf   
    \Else
        \State $\Mat{x} \gets \mathrm{concat}[\Mat{x}, \Mat{y}_1]$
    \EndIf
\Until{$\mathrm{StopCriteria}(\Mat{x})$}
\State \Return $\Mat{x}$
\end{algorithmic}
\end{algorithm}
\end{minipage}
\hfill
\begin{minipage}[t]{0.48\textwidth}
\begin{algorithm}[H]
\caption{Differentiable Textual Optimization (DTO)}
\label{alg:dto}
\begin{algorithmic}[1]
\Require Prefix $\Mat{x}$, initial logits $\Mat{z}$, language model $\pi_{LLM}$, reward model $r$, and the number of training steps $T$.
\State $\Mat{z}^{(1)} \gets \Mat{z}$
\For{$t = 1, \cdots, T$}
\For{every $i = 1, \cdots, |\Mat{y}|$}
\State $j^* \gets \argmax_{j \in [|\V|]} \Mat{z}_{ij}^{(t)}$
\State $\Mat{y}^{(t)}_i \gets \Mat{\delta}_{j^*} + \softmax(\Mat{z}^{(t)}_i / \tau) - \mathrm{StopGrad}(\softmax(\Mat{z}^{(t)}_i / \tau))$
\EndFor
\State $\cL_{nll} = -\sum_i
\log \pi_{LLM}(\Mat{y}^{(t)} | \Mat{y}^{(t)}_{\le i-1}, \Mat{x})$
\State $\cL_{reward} = -r(\Mat{y}^{(t)} | \Mat{x})$.
\State $\cL = \cL_{nll} + \lambda \cL_{reward}$.
\State $\Mat{z}^{(t+1)} \gets \Mat{z}^{(t)} - \eta \nabla_{\Mat{z}} \cL$.
\EndFor
\State \Return $\Mat{z}^{(T)}$
\end{algorithmic}
\end{algorithm}
\end{minipage}
\caption{Basic implementation of \name. \name is an iterative decoding algorithm driven by DTO. At each decoding step, DTO applies gradient descent on the logits initialized from the base model to optimize a reward-informed loss to refine the policy. The updated policy is then combined with rejection sampling, leading to high-reward responses. The pseudocode for the full implementation with acceleration techniques (Sec. \ref{sec:accelerate}) is deferred to Appendix \ref{app:impl}.}
\vspace{-0.5em}
\label{fig:alg}
\end{figure}

\section{Reasoning with Gradient-Driven Decoding}
\label{sec:method}

\paragraph{Overview.}
In this section, we introduce \name, a novel reasoning algorithm that scales inference-time computation by performing gradient descent in the sample space to refine the outputs of a base policy. The overall pipeline, as illustrated in Fig.~\ref{fig:alg}, is structured as an iterative decoding process.
Given a prefix $\Mat{x}$, the model first generates an initial response $\Mat{y}^{(0)}$. \name then represents the generated sequence through its per-token pre-softmax logits $\Mat{z}^{(0)}$ and optimizes these logits via gradient descent to maximize the sequence-level reward $r(\Mat{y} | \Mat{x})$ (Sec.~\ref{sec:dto}).
After optimization, \name resamples the \textit{first token} of the generated sequence using the fine-tuned logits $\wt{\Mat{z}}_{1}$.
If the resampled token differs from the original, the subsequent tokens are regenerated, and this candidate token is accepted only if its yielded response achieves a higher reward under $r(\cdot | \Mat{x})$ (Sec.~\ref{sec:decoding}).
The procedure then proceeds to the next token by incorporating the first generated token into the prefix, and repeating this optimization-and-resampling loop.
\name scales inference-time reasoning by allocating additional computation to optimize the policy's outputs via iterative gradient descent.
To further improve efficiency, we propose a series of system co-design strategies that selectively skip tokens unlikely to benefit from optimization and reuse model outputs and KV caches to accelerate decoding (Sec.~\ref{sec:accelerate}).

\subsection{Differentiable Textual Optimization}
\label{sec:dto}

The core step of our algorithm is leveraging gradient information to refine an initial response generated by the base policy.
Existing reward-guided decoding methods \citep{wei2022chain, wang2022self, yao2024tree, hao2023reasoning} can be regarded as zeroth-order approaches, as they rely solely on reward values.
However, reward feedback is often sparse, and searching for improved solutions based only on scalar reward values can be sample-inefficient, particularly when the base policy is weak.
We note that most reward models are inherently differentiable, as they are typically implemented with transformer-based sequence classifiers \citep{stiennon2020learning, ouyang2022training, dong2024rlhf}.
This opens the door to exploiting not just reward values but also reward gradients, which provide richer directional information to guide samples toward high-reward regions.
Motivated by this, we reformulate the search problem in Eq.~\ref{eqn:bellman} as a gradient-based differentiable optimization.
We term this approach \textit{Differentiable Textual Optimization (DTO)}, which differentiates reward over token space for progressive response improvement.

\paragraph{Objective.}
Our overall goal is to refine a given sequence of tokens $\Mat{y}^{(0)}$ so as to maximize the reward.
However, directly maximizing $r(\Mat{y} | \Mat{x})$ risks \textit{reward hacking} \citep{pan2022effects}, as the optimization trajectory of $\Mat{y}$ may drift away from the distribution under which $r(\Mat{y} | \Mat{x})$ is well-calibrated -- typically the prior distribution induced by $\pi_{LLM}$.
To mitigate this, we constrain $\Mat{y}$ to remain within the high-density region of $\pi_{LLM}$.
Concretely, we regularize the log-likelihood of $\Mat{y}$, thereby penalizing deviations from the distribution represented by the language model.
The resulting objective function to be minimized is given by:
\begin{align}
\label{eqn:obj}
\cL(\Mat{y}) := -\lambda r(\Mat{y} | \Mat{x}) - \log \pi_{LLM}(\Mat{y} | \Mat{x}),
\end{align}
where $\lambda > 0$ is a hyper-parameter to balance the reward value and the regularization term.
Intuitively, Eq. \ref{eqn:obj} seeks a response $\Mat{y}$ that not only achieves a high reward but also maintains fluency and faithfulness in natural language \citep{kumar2021controlled, qin2022cold, yuan2025inference}.
To estimate the log-likelihood $\log \pi_{LLM}(\Mat{x} | \Mat{y})$, we decompose it sequentially from left to right, which results in the next-token prediction loss: $\log \pi_{LLM}(\Mat{y} | \Mat{x}) = \sum_{i=1}^{|\Mat{y}|} \Mat{y}_i^\top
\log \Cat\left(\pi_{LLM}(\cdot | \Mat{y}_{\le i-1}, \Mat{x})\right)$.

\paragraph{Parameterization.}
The token space of $\Mat{y}$ is a discrete where gradients cannot directly operate.
Therefore, we propose to parameterize the tokens via the underlying logit vectors used to sample them.
At the initialization stage, we use the LLM-generated logits to initialize $\Mat{z}^{(0)} \in \real^{|\Mat{y}^{(0)}| \times |\V|}$.
During optimization, we use Gumbel-softmax straight-through trick to parameterize $\Mat{y}^{(t)}_i = \Mat{\delta}_{\argmax_{j \in |\V|} \Mat{z}^{(t)}_{ij}} + \softmax(\Mat{z}^{(t)}_i / \tau) - \mathrm{StopGrad}(\softmax(\Mat{z}^{(t)}_i / \tau))$ \citep{bengio2013estimating, jang2016categorical}, where $\Mat{\delta}_i$ denotes the $i$-th canonical basis and $\tau > 0$ is the temperature coefficient.
By this means, gradient descent can be equivalently performed on the space of $\Mat{z}^{(t)}$ as: $\Mat{z}^{(t+1)} = \Mat{z}^{(t)} - \eta\nabla_{\Mat{z}} \cL(\Mat{z}^{(t)})$.

As we will demonstrate in Sec.~\ref{sec:grad}, the gradient of $\cL$ propagates information bidirectionally. Preceding tokens act as a regularizer on successive tokens, enforcing consistency with the autoregressive generation process, while trailing tokens propagate outcome-level reward signals and full-sequence alignment back to earlier tokens through attention. This implements a closed-loop control of earlier predictions influencing subsequent decoding steps, thereby capturing the key recursive structure of reasoning characterized in Eq.~\ref{eqn:bellman}.
Furthermore, in Sec.~\ref{sec:tto_vs_ppo}, we establish a close connection between DTO, which optimizes directly in the sample space, and policy-optimization (e.g., PPO) that operate in the parameter space.
We show that DTO provably shifts the drawn samples toward the reward-maximizing distribution induced by the original policy.

\subsection{Iterative Decoding with DTO} \label{sec:decoding}

In this section, we elaborate on the detailed iterative generation process with DTO integrated for policy improvement.
Akin to autoregressive decoding, \name generates the full response token by token.
The sampling of each token consists of the following two steps:

\paragraph{Policy Improvement via DTO.}
Starting from a prefix $\Mat{x} \in \V^{*}$, we let the LLM $\pi_{LLM}$ generate a continuation sequence $\Mat{y}^{(0)}$ along with its pre-softmax logits $\Mat{z}^{(0)}$.
We then apply the DTO algorithm to optimize $\Mat{z}^{(0)}$ for $T$ steps, yielding refined logits $\wt{\Mat{z}}$.
The logits corresponding to the first token are treated as the improved policy for predicting the immediate next token, intentionally adjusted to yield higher reward when used to generate the continuing responses.
Accordingly, we resample the next token from this updated policy: $\wt{\Mat{y}}_1 \sim \softmax(\wt{\Mat{z}}_1 / \tau)$.

\paragraph{Rejection Sampling.}
Once a new next-token candidate $\wt{\Mat{y}}_1$ is obtained, we first compare it with the initial prediction $\Mat{y}_1$.
If $\wt{\Mat{y}}_1 = \Mat{y}_1$, no effective policy update occurs, and we proceed directly to the next token for policy refinement.
If $\wt{\Mat{y}}_1 \ne \Mat{y}_1$, we perform an additional rollout conditioned on $\wt{\Mat{y}}_1$ as the next token, yielding a new response $\wt{\Mat{y}}$. Both $\Mat{y}$ and $\wt{\Mat{y}}$ are then evaluated under the reward function, and the token that yields a full response with the higher reward is retained.

\paragraph{Test-Time Scaling.}
We scale computation in \name along two axes:
(1) increasing the number of gradient update steps used by DTO to refine the policy, and (2) performing rejection sampling among rollouts yielded by the original and updated policy.
Comparatively, allocating additional compute to gradient-based optimization is not only more effective in incorporating reward signals into the policy, but also more efficient than purely autoregressive decoding.
This efficiency arises because computing the full-sequence gradient $\nabla_{\Mat{z}} \cL$ leverages the parallel execution of transformers: a single gradient step propagates updates across all tokens within one model call, whereas a standard autoregression generates only a single token per model call.
As we will show in Sec.~\ref{sec:analysis}, sampling from the policy refined by DTO yields a significantly higher chance of reward improvement.

\subsection{Accelerating \name}
\label{sec:accelerate}

The naive implementation of \name is inefficient due to two primary bottlenecks: (1) decoding each token requires a full optimization procedure, where each step involves backpropagation through two large models; and (2) generating a single token requires an additional full rollout.
In this section, we demonstrate that \name is amenable to several strategies that significantly accelerate both optimization and generation, while adaptively allocating compute to the tokens that matter most.

\paragraph{Gradient Caching.}
The gradient backpropagated to the logits $\Mat{z}$ can be decomposed via the chain rule as: $\nabla_{\Mat{z}} \cL = \frac{\partial \Mat{z}}{\partial \Mat{y}} \frac{\partial \cL}{\partial \Mat{y}}$, wherein the term $\frac{\partial \cL}{\partial \Mat{y}}$ dominates the computational cost, since it requires a full forward and backward pass through both the language model and the reward model.
However, we observe that $\Mat{y}$ -- the one-hot vectors indicating the maximal entries in the soft logits $\Mat{z}$ -- changes infrequently as optimization proceeds.
Exploiting this property, we cache the gradient $\frac{\partial \cL}{\partial \Mat{y}}$ once computed, and reuse it until the maximal entries of $\Mat{z}$ flip.
In our implementation, we retain the cached gradient $\Mat{g}_i = \tfrac{\partial \cL}{\partial \Mat{y}_i}$ for every $i \in [|\Mat{y}|]$ whenever $\Mat{y}$ is updated, and otherwise recover it efficiently using the surrogate loss $\cL_{cache} = \sum_{i=1}^{|\Mat{y}|} \Mat{y}_i^\top \Mat{g}_i$ to recover the saved gradients $\{\Mat{g}_i\}_{i \in [|\Mat{y}|]}$ whenever $\Mat{y}$ remains unchanged from the previous iteration.
See Algorithm \ref{alg:dto_plus} in Appendix \ref{app:impl}.

\paragraph{Rollout Trajectory Reusing.}
We further note that the rollout strategy can be improved to reduce unnecessary computation and better leverage the KV cache.
In the naive implementation (Sec.~\ref{sec:decoding} or Algorithm \ref{alg:decoding}), \name generates a sequential rollout to optimize the next-token prediction policy for every decoding step.
However, the rollout trajectory, including both tokens and logits, continuing from the previously accepted token can be directly reused as the rollout for the subsequent token.
In Algorithm \ref{alg:decoding}, we skip the rollout at the beginning of each step and reuse $\Mat{y}_{\ge2}$ and $\Mat{z}_{\ge2}$ as the rollout for the next token if the resampled token $\wt{\Mat{y}}$ is rejected; otherwise, we continue with $\wt{\Mat{y}}$ and $\wt{\Mat{z}}$ for the next step.
We also limit the total number of rollouts as $N_{max}$.
Once the total number of rollouts exceeds the
$N_{max}$, we terminate the iterative policy refinement and generate the remaining tokens using
standard autoregressive decoding.
See more details in Algorithm \ref{alg:decoding_plus} in Appendix \ref{app:impl}.

\paragraph{Confidence- and Gradient-Guided Token Selection.}
Running DTO to optimize the policy at every decoding step can result in redundant computation.
We observe that token logits with either high confidence (see Appendix \ref{app:pitfall_sm}) or small gradients are unlikely to be modified under DTO.
To address this, we introduce two selection criteria, \textit{entropy-based} and \textit{gradient-based} to determine which tokens should undergo policy refinement.
Specifically, we define two hyperparameters, $\epsilon_{ent}$ and $\epsilon_{grad}$. DTO is applied only when the entropy of the token logits satisfies $H(\Mat{z}_1) > \epsilon_{ent}$ and the gradient magnitude exceeds $\lVert \nabla_{\Mat{z}_1} \cL \rVert_2 > \epsilon_{grad}$, where $H(\cdot)$ denotes the entropy of a categorical distribution.
We refer readers to Algorithm \ref{alg:dto_plus} in Appendix \ref{app:impl} for more details.

\section{Theoretical Analysis}

\paragraph{Interpretation of Gradient Updates.}
\label{sec:grad}
We analyze the gradient of $\cL$ to reveal how DTO updates the response.
The derivatives in terms of the $l$-th token $\partial \cL / \partial \Mat{x}_l$ under loss Eq. \ref{eqn:obj} can be decomposed as follows:
\begin{align*}
\frac{\partial \cL}{\partial \Mat{y}_i} = - \underbrace{\log \Cat\left(\pi_{LLM}(\cdot | \Mat{y}_{\le i-1}, \Mat{x})\right)}_{\Mat{\delta}_{prefix}} - \underbrace{\sum_{j=i+1}^{|\Mat{y}|}\frac{\partial \log \Cat\left(\pi_{LLM}(\cdot | \Mat{y}_{\le j-1}, \Mat{x})\right)}{\partial \Mat{y}_i} \Mat{x}}_{\Mat{\delta}_{postfix}}
- \lambda  \underbrace{ \frac{\partial r(\Mat{y} | \Mat{x})}{\partial \Mat{y}_i}}_{\Mat{\delta}_{reward}}.
\end{align*}
We defer the derivation to Appendix \ref{app:grad_derive}.
The first term, $\Mat{\delta}_{\text{prefix}}$, updates the token $\Mat{x}_i$ based on its preceding context, aligning the next-token policy with the autoregressive prediction probabilities produced by the language model.
The second term, $\Mat{\delta}_{\text{postfix}}$, propagates information from subsequent tokens through the attention mechanism, encouraging global consistency with respect to its future context.
Finally, the reward gradient $\Mat{\delta}_{\text{reward}}$ provides a sequence-level signal, transmitting information from later tokens to $\Mat{y}_i$ via attention.
As highlighted by \citet{bachmann2024pitfalls}, the order of generation plays a crucial role in complex reasoning or algorithmic tasks.
Pure left-to-right generation can fall short of \textit{error accumulation}, making it insufficient for intricate logical reasoning processes.
An ideal decoding method for reasoning should allow for iterative refinement of the reasoning chain in both forward and backward directions \citep{yao2024tree, hao2023reasoning}.

\paragraph{Inference-Time Gradient Descent is ``Deamortized'' PPO.}
\label{sec:tto_vs_ppo}
We theoretically establish the connection between the test-time textual optimization and parametric RL-based training.
RLHF \citep{schulman2017proximal, ouyang2022training} and RLVR \citep{guo2025deepseek, shao2024deepseekmath} have been demonstrated to be particularly effective for mathematical reasoning tasks \citep{wang2023math, zhao2023group, dong2024rlhf, shao2024deepseekmath}.
The primary objective of RL is to fine-tune a pre-trained LLM using RL algorithms, ensuring that its responses to specific prompts maximize a given reward function.
Among various RL algorithms, KL-regularized approaches, such as Proximal Policy Optimization (PPO) \citep{schulman2017proximal}, have been widely adopted in practice.
In this section, we uncover the hidden connection between PPO and our proposed DTO.
Formally, let $\rho: \V^* \rightarrow \real$ represent an LLM to be aligned with the reward function $r$, initialized from the pre-trained policy $\pi_{LLM}$.
PPO optimizes for $\rho$ by minimizing the following functional objective defined over the space of distributions:
\begin{align}
\label{eqn:ppo}
\cL_{PPO}(\rho) := &-\mean_{\Mat{y} \sim \rho}[\lambda r(\Mat{y})] + \KL(\rho \Vert \pi_{LLM}),
\end{align}
where the first expectation term estimates the expected reward, while in the second term, the KL-divergence regularizes the distributional discrepancy between $\rho$ and $\pi_{LLM}$.
Assuming LLMs' input domain can be extended to the ambient space beyond discrete vocabularies,
then we can show the relation between (stochastic) gradient flow of Eq. \ref{eqn:obj} and functional solution to PPO:
\begin{theorem}
\label{thm:fokker_plank_ppo}
Suppose $\{\rho^{t}\}_{t \ge 0}$ denotes the Wasserstein gradient flow minimizing Eq. \ref{eqn:ppo} in the distribution space with boundary conditions $\rho^{0} = \pi_{LLM}$ and $\rho^{\infty} = \rho^* = \argmin_{\rho}\cL_{PPO}(\rho)$.
Then we can draw samples from $\rho^*$ by first initializing $\Mat{x}^{0} \sim \pi_{LLM}$ and simulating a trajectory $\{\Mat{x}^t\}_{t \ge 0}$ following the stochastic gradient flow of Eq. \ref{eqn:obj}: $\frac{d \Mat{x}^t}{d t} = -\nabla \cL(\Mat{x}^t) + \sqrt{2} \Mat{\epsilon}_t$, where $\{\Mat{\epsilon}_t \in \gauss(\Mat{0}, \Mat{I})\}_{t \ge 0}$ are Brownian motions.
\end{theorem}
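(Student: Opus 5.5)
Treating $\pi_{LLM}$ and $r$ as smooth functions on the ambient space $\real^d$ that contains the token embeddings, the plan is to identify the Fokker--Planck equation of the Langevin dynamics with the Wasserstein gradient flow of $\cL_{PPO}$. Both flows should then have the same marginals $\rho^t$ for all $t$, and hence the same limit $\rho^*$.

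\textbf{Step 1: first variation of the PPO objective.} Write
\[
\cL_{PPO}(\rho)=\int \rho\,(-\lambda r)\,d\Mat{x}+\int\rho\log\rho\,d\Mat{x}-\int\rho\log\pi_{LLM}\,d\Mat{x}.
\]
This equals $\int \rho\,\cL\,d\Mat{x}+\int\rho\log\rho\,d\Mat{x}$ with $\cL=-\lambda r-\log\pi_{LLM}$ as in Eq.~\ref{eqn:obj}. It is therefore a free energy with potential $\cL$ and unit temperature. Its first variation is $\delta\cL_{PPO}/\delta\rho=\cL+\log\rho+1$.

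\textbf{Step 2: the gradient flow and its minimizer.} The Wasserstein-2 gradient flow (JKO/Otto calculus) is
\[
\partial_t\rho^t=\nabla\cdot\bigl(\rho^t\nabla(\cL+\log\rho^t)\bigr)=\nabla\cdot(\rho^t\nabla\cL)+\Delta\rho^t .
\]
Separately, minimizing $\cL_{PPO}$ over probability densities with a Lagrange multiplier gives the Gibbs form
\[
\rho^*\propto e^{-\cL}=\pi_{LLM}\,e^{\lambda r}.
\]
This is the known closed-form PPO optimum, and $\rho^*$ is a stationary point of the flow. Because $\cL_{PPO}$ equals $\KL(\rho\Vert\rho^*)$ plus a constant, it is strictly convex in $\rho$, so $\rho^*$ is the unique stationary point.

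\textbf{Step 3: Langevin dynamics.} Read the stochastic flow as the SDE $d\Mat{x}^t=-\nabla\cL(\Mat{x}^t)\,dt+\sqrt2\,d\Mat{W}_t$. Itô's formula applied to a test function $\varphi$, followed by integration by parts, shows that the law $q^t$ of $\Mat{x}^t$ satisfies
\[
\partial_t q^t=\nabla\cdot(q^t\nabla\cL)+\Delta q^t .
\]
This is the same PDE as in Step 2, with the same initial condition $q^0=\pi_{LLM}=\rho^0$. By uniqueness of weak solutions of this linear Fokker--Planck equation, $q^t=\rho^t$ for every $t$. Hence $\Mat{x}^t\sim\rho^t$, and in the limit $\Mat{x}^t\to\rho^*$ in law. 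Thus simulating the noisy gradient flow of Eq.~\ref{eqn:obj} from samples of $\pi_{LLM}$ produces samples of $\rho^*$, as the theorem states.

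\textbf{Main obstacle.} The delicate part is the analytic justification, not the formal identification above. It requires:
\begin{itemize}
\item enough regularity and growth of $\cL$ (for instance $\nabla\cL$ locally Lipschitz and $e^{-\cL}$ integrable) for the SDE to have a unique strong solution;
\item uniqueness of the Fokker--Planck solution;
\item the convergence $\rho^t\to\rho^*$, for example via exponential decay of $\KL(\rho^t\Vert\rho^*)$ under a log-Sobolev inequality for $\rho^*$.
\end{itemize}
I would state these as standing assumptions, since the theorem already presumes that $\rho^\infty=\rho^*$ exists. I would then cite the Jordan--Kinderlehrer--Otto correspondence rather than reprove it.

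A minor point to check is that the discrete-to-continuous relaxation of Eq.~\ref{eqn:obj} makes $\pi_{LLM}$ a density on $\real^d$, so that the KL term in Eq.~\ref{eqn:ppo} is the differential one used in Step 1.
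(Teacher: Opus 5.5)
Your proposal is correct and follows essentially the same route as the paper. Both compute the first variation $\cL+\log\rho+1$ of $\cL_{PPO}$, write the Wasserstein gradient flow as the Fokker--Planck equation of $d\Mat{x}^t=-\nabla\cL(\Mat{x}^t)\,dt+\sqrt{2}\,d\Mat{W}_t$, and conclude from the common initial law $\pi_{LLM}$ that the marginals coincide and converge to $\rho^*$. Your version is in fact tidier than the paper's. Its intermediate PDE has sign slips: the continuity equation is written with the ascent sign, and $-\lambda\nabla r-\nabla\log\pi_{LLM}$ is equated with $-\nabla\cL$. These only wash out in the final SDE, and the paper gets the converse direction by citation rather than through your explicit Fokker--Planck uniqueness step and Gibbs form $\rho^*\propto\pi_{LLM}e^{\lambda r}$.
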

Theorem \ref{thm:fokker_plank_ppo} is proved in Appendix \ref{app:prove_wass_grad}.
Theorem \ref{thm:fokker_plank_ppo} shows that instead of optimizing the entire policy to satisfy the reward function w.r.t. Eq. \ref{eqn:ppo}, there exists a trajectory driven by the gradients of Eq. \ref{eqn:obj} on the sample space that can directly generate samples from the optimal distribution minimizing Eq. \ref{eqn:ppo}.
Pre-training scaling and test-time scaling can be unified and interpreted through Theorem \ref{thm:fokker_plank_ppo} as two complementary forms of statistical inference: parametric and non-parametric (particle-based) inference \citep{liu2016stein, chen2018unified}.
The pre-training stage corresponds to parametric inference: a global parameter is optimized to minimize the overall loss across a dataset, amortizing the cost of individual samples into a shared parameter.
Increasing the size of this parameter space enhances the model's representational capacity, thereby reducing the average cost per sample.
In contrast, test-time scaling via DTO is analogous to non-parametric inference, which performs optimization in the sample space, treating each sample as an independent ``particle'' that minimizes its own cost.
This allows for fine-grained adaptation at the individual sample level.
The Wasserstein gradient flow provides a mathematical framework to describe the relationship between the dynamics of measures (global distributions) and individual samples, thereby bridging the conceptual gap between pre-training scaling and test-time scaling.

\section{Experiments}
\label{sec:expr}

\subsection{Results on Math Reasoning}
\begin{table}[h]
\centering
\caption{Accuracy (\%) on math reasoning datasets compared with baseline methods, including both test-time and training-time approaches. We skip results on AIME datasets for Llama-3.1-8B as it is incapable of generating reasonable performance. We mark the best performer in \textbf{bold} and the runner-up with \underline{underline}.
Our method outperforms all test-time baselines and even achieves performance on par with the training-based methods (SFT and GRPO), respectively.
}
\label{tab:acc}
\vspace{-0.5em}
\resizebox{\linewidth}{!}{
\begin{tabular}{llcccc}
\toprule
Models & Methods & MATH-500 & AMC & AIME24 & AIME25 \\\midrule
\multirow{9}{*}{\begin{tabular}{@{}c@{}}Qwen-2.5-7B\end{tabular}} & Greedy decoding & 43.8 & 33.0 & 6.7 & 6.7 \\
& SC~\citep{xie2024self} ($N = 8$) & 69.8 & 49.4 & 22.5 & 20.0 \\
& BoN~\citep{stiennon2020learning} ($N = 8$) & 70.2 & 50.1 & 22.5 & 13.3 \\
& ToT~\citep{yao2024tree}& 57.8 & 42.4 & 6.7 & 10.0 \\
& RAP~\citep{hao2023reasoning} & 68.6 & 50.1 & 18.3 & 14.2 \\
\cmidrule{2-6}
& SFT~\citep{ouyang2022training} & 65.8 & 36.4 & 6.3 & 11.7 \\
& GRPO~\citep{guo2025deepseek} & 70.8 & \textbf{52.8} & 20.8 & \textbf{16.7} \\
\cmidrule{2-6}
& \name ($N_{max} = 8$) & \textbf{71.0} & \underline{51.5} & \textbf{23.3} & \underline{15.0} \\
\midrule
\multirow{7}{*}{\begin{tabular}{@{}c@{}}Qwen-2.5-7B-Instruct\end{tabular}} & Greedy decoding & 71.2 & 43.0 & 5.3 & 7.5 \\
& SC~\citep{xie2024self} ($N = 8$) & 76.6 & 55.5 & 25.0 & \textbf{22.5} \\
& BoN~\citep{stiennon2020learning} ($N = 8$) & 77.8 & 55.9 & 22.5 & 18.3 \\
& ToT~\citep{yao2024tree} & 75.4 & 48.2 & 20.0 & 18.3 \\
& RAP~\citep{hao2023reasoning} & 80.2 & 54.6 & 1.6 & 12.5 \\
& TPO~\citep{li2025test} &  77.6 & 55.9  & 6.7  & 11.1 \\
\cmidrule{2-6}
& \name ($N_{max} = 8$) & 80.4 & 56.8 & 26.6 & 20.0 \\
\midrule
\multirow{8}{*}{\begin{tabular}{@{}c@{}}Llama-3.1-8B-Instruct\end{tabular}} & Greedy decoding & 40.6 & 19.3 & - & - \\
& SC~\citep{xie2024self} ($N = 8$) & 54.8 & 25.7 & - & - \\
& BoN~\citep{stiennon2020learning} ($N = 8$) & 52.2 & 26.1 & - & - \\
& ToT~\citep{yao2024tree} & 50.2 & 25.6 & - & - \\
& RAP~\citep{hao2023reasoning} & 55.4 & 25.8 & - & - \\
\cmidrule{2-6}
& SFT~\citep{ouyang2022training} & 46.6 & 20.2 & - & -\\
\cmidrule{2-6}
& \name ($N_{max} = 8$) & \textbf{55.8} & \textbf{28.9} & - & - \\
\bottomrule
\end{tabular}}
\vspace{-1em}
\end{table}

\paragraph{Experiment Details.}
Tab.~\ref{tab:acc} compares our test-time method, \name, against a variety of baselines. We benchmark its performance against other test-time approaches, including greedy decoding, Self-Consistency (SC)~\citep{xie2024self}, Best-of-N (BoN)~\citep{stiennon2020learning}, tree-search based methods: Tree-of-Thought (ToT)~\citep{yao2024tree} and Reasoning via Planning (RAP)~\citep{hao2023reasoning}, and the iterative refinement approach: TPO~\citep{li2025test}. Additionally, we include training-based methods such as Supervised Fine-Tuning (SFT) and GRPO~\citep{guo2025deepseek} for a comprehensive comparison.
We evaluate two model families, Qwen-2.5-math~\citep{yang2024qwen2} and Llama-3.1~\citep{grattafiori2024llama}, on four representative mathematical reasoning benchmarks: MATH-500~\citep{hendrycks2021measuring}, AIME24, AIME25, and AMC~\citep{numina_math_datasets}.
We leverage reward models from the Skywork-V2 family \citep{liu2025skywork}: for Skywork-V2-Qwen-4B for Qwen-based models and Skywork-V2-Llama-8B for Llama family models.
For BoN and SC, we let $N = 8$ to match $N_{max} = 8$ used in our methods.
For TPO, we set the number of samples per step as $N_{samples} = 2$ and the number of refinement steps as $N_{refine} = 2$. 
For ToT and RAP, we adopt the default hyperparameters in \citet{hao2024llm} to yield meaningful results.
Please refer to Appendix~\ref{app:exp} for more experimental details.

\paragraph{Performance Comparison.}

Our method shows superior performance across all models and benchmarks on test-time methods. We even reach comparable performance with training-based methods.
Specifically, with the Qwen-2.5-7B base model, our approach achieves the highest scores among test-time methods on MATH-500 (71.0\%) and AIME24 (23.3\%), and remains highly competitive with the training-based GRPO method (trained with 35k examples).
The advantage is even more pronounced with the instruction-tuned Qwen-2.5-7B-Instruct and Llama-3.1-8B-Instruct, where our method again leads all other approaches on all benchmarks.
Notably, on Qwen-2.5-7B-Instruct, our method scores 80.4\% on MATH-500 and 56.8\% on AMC, and on Llama-3.1-8B-Instruct, our method achieves 55.8\% on MATH-500 and 28.9\% on AMC. 
In the meantime, we also provide an example to show how \name works in real-world scenarios in Appendix~\ref{app:example}.

\begin{figure}[b]
    \centering
    \vspace{-1em}
    \includegraphics[trim=0 0 0 8,clip,width=0.9\linewidth]{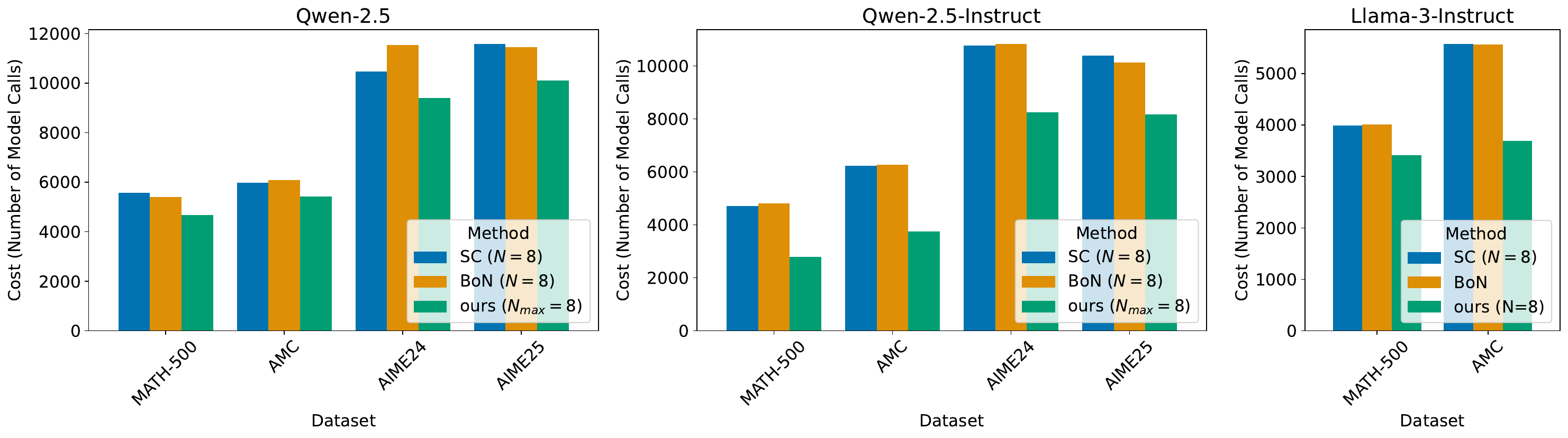}
    \vspace{-0.5em}
    \caption{A comparison of computational cost, measured by the number of model calls. Our method reduces costs by up to 40.2\% compared to baselines.}
    \label{fig:cost}
\end{figure}

\subsection{Cost Comparison}

We note that \name can exhibit an efficiency advantage by utilizing compute more effectively. This stems from \name's ability to leverage parallelized attention execution to compute gradients and update the decoding-time policy on the fly, which often leads to a small practical runtime.
To capture this advantage, we propose using the number of model calls as a surrogate theoretical efficiency metric.
This metric counts both a single recurrent computation and a parallel forward pass as one model call.
The choice is motivated by the practical observation that, under ideal attention parallelization, a single model call -- regardless of whether it is executed recurrently or in parallel -- can incur comparable wall-clock cost.
Thus, it serves as a unifying metric that reflects algorithmic complexity under idealized system-level and hardware optimizations, while mitigating discrepancies arising from implementation details (e.g., compatibility with serving engines).

Fig.~\ref{fig:cost} compares the computational cost of our method with several baselines. For fair comparison, we set the number of generated samples to 8 for all approaches ($N=8$ for SC and BoN, and $N_{max} = 8$ for our \name). 
Our method delivers superior performance at a significantly lower cost than SC and BoN. For instruction-tuned models, it reduces the number of model calls by up to 40.2\%, while for base models, it outperforms all baselines using only about 90\% of this metric.
The reason for the reduced cost is twofold: (1) with confidence- and gradient-guided selection, rollouts usually start from the middle of the sequence, instead of from the beginning as BoN and SC do; (2) the optimization cost with gradient caching remains lightweight while our DTO enables efficient parallelizable execution of transformers and revision of tokens.
These results imply that \name has stronger prospects for achieving inference efficiency.
Unlike trial-and-error–based test-time scaling methods (e.g., BoN), which repeatedly resample outputs without guidance, our method updates the decoding policy in a targeted and strategic fashion.

\subsection{Test-Time Scaling Law}

\begin{figure}[htb]
\centering
\includegraphics[width=0.85\linewidth]{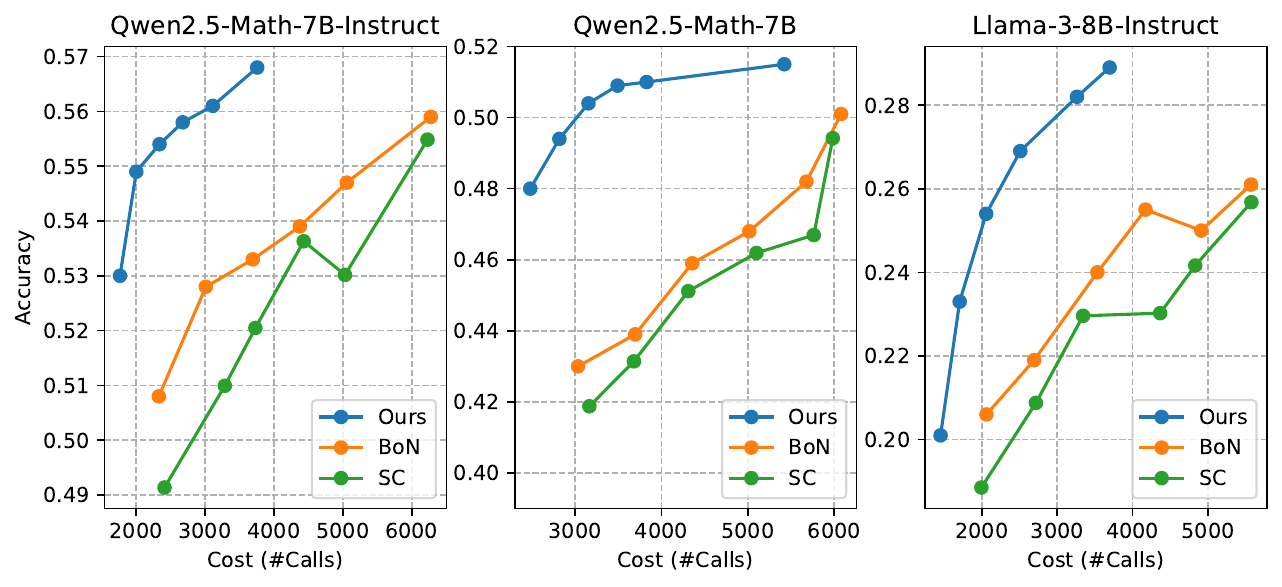}
\vspace{-1em}
\caption{Test-time scaling curves comparing our method with BoN and SC. We change the number of samples $N$ for BoN and SC and number of rollouts $N_{max}$ for our method. The results show \name achieves superior performance with reduced cost across multiple models.}
\label{fig:scaling}
\end{figure}

We present our test-time scaling curves in Fig. \ref{fig:scaling}, comparing our method against Best-of-N (BoN) and Self-Consistency (SC). The figure plots accuracy against computational cost, showing how each method's performance scales as more resources are used.
As is evident across all models, our method's curve consistently lies above the baselines. This indicates that for any given computational budget (number of calls), our approach achieves a higher accuracy. These results demonstrate that \name offers a superior trade-off between performance and computational cost, establishing a more efficient frontier compared to these sample-heavy techniques.

\subsection{Algorithm Analysis}
\label{sec:analysis}

\begin{wraptable}{r}{0.55\linewidth}
\centering
\vspace{-1em}
\caption{Ablation study on reward model choice.}
\label{tab:ablation_reward}\vspace{-0.6em}
\resizebox{\linewidth}{!}{
\begin{tabular}{lcc}
\toprule
Model & Skywork-Qwen3-4B & Skywork-Qwen3-8B \\
\midrule
MATH-500 & 80.4  & 80.8 (+0.4) \\
AMC &  56.8 & 57.1 (+0.3)  \\
\bottomrule
\end{tabular}
}
\vspace{-0.5em}
\end{wraptable}
\paragraph{Dependencies on Reward Models.}
Our approach relies on the gradient signal from the reward model to optimize policy at test time.
To study the dependency on the quality of reward models, we further evaluate our approach on Qwen2.5-Math-7B-Instruct paired with the larger Skywork-Reward-V2-Qwen3-8B reward model.
We note that our original choice Skywork-Reward-V2-Qwen3-4B is a smaller and weaker reward model according to the RewardBench~\citep{RewardBench2}. 
According to the Tab.~\ref{tab:ablation_reward}, the performance gap between the 4B and 8B variants remains consistently below 1 point across both MATH-500 and AMC. This indicates that using a smaller reward model does not lead to significant performance degradation compared with the larger, stronger version. This justifies our original choice (in Tab. \ref{tab:acc}) and further suggests smaller reward models may be preferable for improving efficiency.

\begin{wraptable}{l}{0.5\linewidth} 
\vspace{-1em}
\caption{Analysis of rejection rate (\%) in rejection sampling. We set $N = 8$ for the BoN baseline and also set $N_{max} = 8$ for our \name. The theoretical rejection rate of the baseline is $66.0\%$.}
\label{tab:rejection_analysis}
\centering
\begin{tabular}{lcc}
\toprule
Model & Baseline & \name \\\midrule
Qwen-2.5          & 65.9 & \textbf{32.8} \\
Qwen-2.5-Instruct & 66.5 & \textbf{28.9} \\
Llama-3-Instruct  & 66.9 & \textbf{40.1} \\
\bottomrule
\end{tabular}
\vspace{-1em}
\end{wraptable}

\paragraph{Rejection Rate Analysis.}
As described in Sec.~\ref{sec:decoding} and Algorithm~\ref{alg:decoding}, \name first applies DTO to directly optimize the policy in the logit space, and then compares a continuation $\wt{\Mat{y}}$ generated from a resampled token with the original rollout sequence $\Mat{y}$.
The token sampled from the optimized policy is adopted only if it yields a continuation with a higher reward.
Henceforth, it becomes essential to quantify the acceptance rate of tokens drawn from the optimized policy in order to justify the effectiveness of DTO.

To this end, we measure the \textit{rejection rate}, defined as the percentage of candidates produced by DTO that are rejected for failing to improve the reward.
For comparison, we also evaluate this metric on a baseline that performs rejection sampling without DTO, which is equivalent to BoN.
Theoretically, performing rejection sampling $N$ times over an identical distribution yields a rejection rate of $1 - (\sum_{k=1}^{N} 1/k)/N$ that converges to one as $N \to \infty$.
For $N=8$, the expected rejection rate is approximately $66.0\%$.
We report our measured rejection rate in  Tab.~\ref{tab:rejection_analysis}.
We report the empirical rejection rates in Tab.~\ref{tab:rejection_analysis}.
The results show that rejection sampling without DTO closely matches the theoretical prediction, while rejection sampling with DTO significantly reduces the rejection rate (by up to 30\%).
This confirms that DTO is effective in improving the next-token policy, producing tokens that lead to continuations with higher rewards.

\section{Conclusion and Limitations}

We presented \name, an inference-time reasoning framework that introduces Differentiable Textual Optimization (DTO) to refine token logits via gradient-based optimization. By combining gradient signals from both the LLM likelihood and a reward model, \name enables more effective policy improvement than zeroth-order search methods, while incorporating rejection sampling and speedup techniques to boost effectiveness and efficiency.
Theoretically, we show that aligning with a reward function is equivalent to gradient-based optimization in the sample space.
\name delivers substantial performance gains over base models while consistently reducing computation cost, illustrating a sharper and more efficient scaling paradigm for LLM reasoning.

\paragraph{Limitations.}
In line with previous observations \citep{yue2504does}, the performance of \name appears to remain bounded by the capabilities of the underlying base model and reward model, particularly under limited computation.
The base and reward models are required to share the same vocabulary to allow for end-to-end logit optimization.
Moreover, integrating \name into efficient LLM serving pipelines requires more careful system co-design to incorporate test-time gradient descent.

\subsubsection*{Acknowledgments}
Authors thank Shibo Hao, Junbo Li, and Sina Alemohammad for helpful discussions.
This work was supported in part by NSF Awards 2145346 (CAREER) and the NSF AI Institute for Foundations of Machine Learning (IFML). It was also supported by computing support on the Vista GPU Cluster through the Center for Generative AI (CGAI) and the Texas Advanced Computing Center (TACC) at The University of Texas at Austin.
Peihao Wang is in part supported by Google PhD Fellowship in Machine Learning and ML Foundations.
Ruisi Cai is in part supported by NVIDIA Graduate Fellowship.
Zhen Wang is supported by OpenAI Research Grant and Gordon and Betty Moore Foundation Postdoctoral Fellowship.

\subsubsection*{Ethics Statement}
This research primarily concentrates on developing inference algorithms to enhance reasoning in large language models (LLMs), with a particular emphasis on mathematical reasoning.
It relies solely on extant LLMs and does not involve training, fine-tuning model weights, or creating new LLMs.
As a result, the work does not raise any novel domain-specific ethical considerations or societal impacts beyond those already well-documented in relation to large-scale language models more broadly.

\subsubsection*{Reproducibility Statement}
We include pseudocode plus a detailed version in both the main text and Appendix \ref{app:impl}.
Complete derivations and proofs are provided in Appendix \ref{app:theory}. 
Additionally, Appendix \ref{app:exp} contains the full list of hyperparameters, datasets, and model checkpoints required to reproduce our experimental results.

\subsubsection*{The Use of Large Language Models}
Large language models are used solely for sentence-level proofreading.
All research ideation and paper writing were originally carried out by the authors.

\bibliography{iclr2026_conference}
\bibliographystyle{iclr2026_conference}

\newpage
\appendix
\section{Other Related Work}
\label{sec:related}

\paragraph{Scaling LLM Reasoning at Inference Time.}
The paradigm of scaling inference-time compute has emerged as a powerful strategy to amplify the reasoning capabilities of LLMs. Chain-of-thought (CoT) prompting~\cite{wei2022chain} pioneered this direction by eliciting multi-step reasoning explicitly. Subsequent works scale inference-time compute by sampling more reasoning chains or exploring larger reasoning spaces. For instance, Best-of-N (BoN)~\citep{stiennon2020learning, nakano2021webgpt} and Self-consistency (SC-CoT)~\cite{wang2022self} repeatedly sample multiple chains and select the best one using heuristics such as predefined rewards or the consistency. More advanced approaches have introduced various search algorithms to efficiently explore the reasoning space~\cite{yao2024tree, besta2024graph, hao2023reasoning, hao2024llm}. Tree-of-thought (ToT)~\cite{yao2024tree} formulates reasoning as a tree search problem, where the LLM generates thoughts while heuristic rewards guide tree traversal. RAP~\cite{hao2023reasoning, hao2024llm} employs Monte Carlo Tree Search (MCTS) to strategically balance exploration and exploitation in the search space. Recent work by \citet{snell2024scaling} further formalizes the empirical benefits of test-time compute scaling. In contrast to these sampling and search-based methods, our work proposes gradient-based optimization to traverse the reward landscape more efficiently, bypassing the trial-and-error inefficiencies of sampling approaches. Notably, our method is orthogonal to recent RL-trained methods for test-time scaling, such as OpenAI's o1 and DeepSeek's R1 -- our method can be directly applied to these models to refine their long CoT inference, though we leave this exploration to future work.

\paragraph{Inference-time Constrained Decoding.}
Constrained decoding is a traditional problem to study in text generation, with popular applications including controllable text generation and preference alignment. Recently, inference-time alignment has been heavily studied to align LLMs with human values as alignment rewards~\cite{li2023rain, huang2024deal, khanov2024args}. Most of them formulate the problem as a reward-guided search process. More relevantly, controllable text generation has leveraged energy-based models (EBMs) \citep{kumar2021controlled, qin2022cold, kumar2022gradient, mireshghallah2022mix, liu2023bolt, yuan2025inference} to relax discrete text sequences into continuous spaces, which enables gradient-based optimization (e.g., Langevin dynamics) to steer generation toward objectives defined by arbitrary energy functions.
While sharing similar reward-constrained decoding principles and gradient-based methodologies, end-to-end sequence optimization is notoriously unstable, often producing broken sequences or failing to converge, which limits its applicability to reasoning tasks.
In contrast, our approach introduces an efficient and novel gradient-based framework integrated with the iterative decoding, tailored for reasoning tasks.

\paragraph{Prompt Optimization.} 
A distinct but related line of research focuses on optimizing prompts rather than decoding sequences. Established principles from the above sections apply here as well. Search-based methods~\cite{zhou2022large, wang2023promptagent, pryzant2023automatic, yuksekgonul2024textgrad} iteratively refine the prompts through automated trial-and-error, while gradient-based approaches include directly optimizing soft prompts~\cite{li2021prefix, lester2021power}, or searching for discrete prompts via gradients~\cite{shin2020autoprompt, shi2022toward, wen2024hard}. Although these techniques share conceptual similarities with our method, esp. the gradient-based optimization methods, they operate on the input (prompt) space rather than the output (reasoning chain) space. Our work focuses on optimizing reasoning trajectories directly, complementing rather than competing with prompt optimization methods -- future work could explore synergies between the two paradigms. 

\noindent \textbf{Continuous Latent Space Reasoning.}
Another line of research explores reasoning in a continuous latent space, bypassing discrete token-level operations. These approaches typically perform iterative refinement on the model's hidden states to solve reasoning problems. For example, some methods frame reasoning as an energy minimization process~\citep{du2022learning} or a fixed-point iteration problem within the latent space, which can be parallelized for efficiency~\citep{wu2025parallel}. Others propose increasing test-time compute by applying recurrent updates to latent representations, effectively deepening the model's computation on-the-fly~\citep{geiping2025scaling}. This paradigm has also been supported by specialized pre-training objectives that encourage models to "ponder" in a continuous space~\citep{zeng2025pretraining} and has received theoretical analysis under the lens of continuous chain-of-thought~\citep{zhu2025reasoning}. While conceptually related in their use of iterative refinement, these methods are fundamentally different from ours. They operate within the LLM's latent space, modifying internal hidden representations. In contrast, our work, $\nabla$-Reasoner, performs optimization directly in the output space. We manipulate the token logits, a continuous relaxation of the discrete vocabulary, guided by reward gradients. This direct textual optimization allows us to refine the reasoning chain itself at inference time without altering the base model's internal forward pass or requiring any specialized training, uniquely positioning our method as a post-hoc, gradient-based search algorithm over the sequence space.

\section{Implementation}
\label{app:impl}

\subsection{Pseudocode}

In Algorithms \ref{alg:decoding} and \ref{alg:dto}, we present a basic implementation for \name.
Below we give a detailed pseudocode for a full version with all acceleration techniques integrated (Sec. \ref{sec:accelerate}).
In Alg. \ref{alg:decoding_plus}, we list the complete version of iterative decoding with confidence- and gradient-guided token selection, rollout reusing, and early stop techniques.
In Alg. \ref{alg:dto_plus}, we present the full DTO algorithm with gradient caching.
We note that these techniques significantly accelerate the decoding speed of \name, as demonstrated in Sec. \ref{sec:expr}.

\begin{algorithm}[t]
\caption{\name: Decoding with DTO}
\label{alg:decoding_plus}
\begin{algorithmic}[1]
\Require Prompt $\Mat{x}$, language model $\pi_{LLM}$, reward model $r$, stop criteria $\mathrm{StopCriteria}$, and the maximal number of rollouts $N_{max}$.
\State $\Mat{y}, \Mat{z} \sim \pi_{LLM}(\cdot | \Mat{x})$
\State $N_{r} \gets 1$
\Repeat
    \If{$H(\Mat{z}_1) \ge \epsilon_{ent}$ and $\lVert \nabla_{\Mat{z}_1} \cL \rVert_2 \ge \epsilon_{grad}$}
    \Comment{Token selection (Sec. \ref{sec:accelerate}).}
        \State $\wt{\Mat{z}} \gets \mathrm{DTO}(\Mat{x}, \Mat{z}, \pi_{LLM}, r)$.
        \Comment{Policy refinement with DTO (Sec. \ref{sec:dto}).}
    \EndIf

    \State $\wt{\Mat{y}}_1 \sim \softmax(\wt{\Mat{z}}_1 / \tau)$.
    \If{$\wt{\Mat{y}}_1 \ne \Mat{y}_1$}
        \State $\wt{\Mat{y}}, \wt{\Mat{z}} \sim \pi_{LLM}(\cdot | \wt{\Mat{y}}_1, \Mat{x})$
        \State $N_{r} \gets N_{r} + 1$
        \If{$r(\wt{\Mat{y}}, \wt{\Mat{y}}_1 | \Mat{x}) > r(\Mat{y} | \Mat{x})$}
        \Comment{Rejection sampling (Sec. \ref{sec:decoding})}
        \State $\Mat{x} \gets \mathrm{concat}[\Mat{x}, \wt{\Mat{y}}_1]$
        \State $\Mat{y} \gets \wt{\Mat{y}}$, $\Mat{z} \gets \wt{\Mat{z}}$
        \Comment{Rollout reusing (Sec. \ref{sec:accelerate})}
        \State \textbf{continue}
        \EndIf   
    \EndIf
    \State $\Mat{x} \gets \mathrm{concat}[\Mat{x}, \Mat{y}_1]$
    \State $\Mat{y} \gets \Mat{y}_{\ge 2}$, $\Mat{z} \gets \Mat{z}_{\ge 2}$
    \Comment{Rollout reusing (Sec. \ref{sec:accelerate})}
    \If{$N_r \ge N_{max}$}
        \Comment{Early stop (Sec. \ref{sec:accelerate}).}
        \State $\Mat{x} \gets \mathrm{concat}[\Mat{x}, \Mat{y}]$
        \State \textbf{break}
    \EndIf
\Until{$\mathrm{StopCriteria}(\Mat{x})$}
\State \Return $\Mat{x}$
\end{algorithmic}
\end{algorithm}

\begin{algorithm}[t]
\caption{Differentiable Textual Optimization (DTO)}
\label{alg:dto_plus}
\begin{algorithmic}[1]
\Require Prefix $\Mat{x}$, initial logits $\Mat{z}$, language model $\pi_{LLM}$, reward model $r$, and the number of training steps $T$.
\State $\wh{\Mat{y}} \gets \mathsf{None}$
\State $\Mat{g}_1, \Mat{g}_2, \cdots \gets \mathsf{None}$
\While{$t < T$}
\For{every $i = 1, \cdots, |\Mat{y}|$}
\State $j^* \gets \argmax_{j \in [|\V|]} \Mat{z}_{ij}^{(t)}$
\State $\Mat{y}^{(t)}_i \gets \Mat{\delta}_{j^*} + \softmax(\Mat{z}^{(t)}_i / \tau) - \mathrm{StopGrad}(\softmax(\Mat{z}^{(t)}_i / \tau))$
\EndFor
\If{$\Mat{y} \ne \wh{\Mat{y}}$}
    \State $\cL_{nll} = -\sum_{i=1}^{|\Mat{y}^{(t)}|}
    \log \pi_{LLM}(\Mat{y}^{(t)} | \Mat{y}^{(t)}_{\le i-1}, \Mat{x})$
    \State $\cL_{reward} = -r(\Mat{y}^{(t)} | \Mat{x})$.    
    \State $\cL = \cL_{nll} + \lambda \cL_{reward}$.
    \Comment{Eq. \ref{eqn:obj}}
    \State $\wh{\Mat{y}} \gets \Mat{y}$
    \State $\Mat{g}_i \gets \frac{\partial \cL}{\partial \Mat{y}^{(t)}_i}$ for every $i \in [|\Mat{y}^{(t)}|]$
    \Comment{Gradient caching (Sec. \ref{sec:accelerate})}
\Else
    \State $\cL = \sum_{i=1}^{|\Mat{y}^{(t)}|} \Mat{g}_i^\top \Mat{y}^{(t)}_i$
    \Comment{Surrogate loss with cached gradient (Sec. \ref{sec:accelerate})}
\EndIf
\State $\Mat{z}^{(t+1)} \gets \Mat{z}^{(t)} - \eta \nabla_{\Mat{z}} \cL$.
\State $t \gets t + 1$.
\EndWhile
\State \Return $\Mat{z}^{(T)}$
\end{algorithmic}
\end{algorithm}

\subsection{Generalization to Progress Reward.}
\label{sec:generalize_reward}

The reward function can take different forms: it may provide an \textit{outcome reward} \citep{cobbe2021training}, offering an overall score for the entire response sequence, or a \textit{process reward} \citep{lightman2023let}, which assesses individual intermediate steps and assigns a series of scores accordingly.
Thus, beyond a single reward defined over the whole sequence, we denote the total reward as the sum of rewards obtained from different subsequences: $R(\Mat{y} | \Mat{x}) = \sum_{l=1}^{|\Mat{y}|} r(\Mat{y}_{\le l} | \Mat{x})$.
In the case of an outcome reward, the reward is only assigned at the end of the response sequence, meaning $r(\Mat{y}_{\le l} | \Mat{x}) = 0$ if $l < |\Mat{y}|$.
Conversely, when using a process reward, rewards are assigned incrementally, with $r(\Mat{y}_{\le l} | \Mat{x}) \ne 0$ only if $\Mat{y}_l$ is an end token of a thought \citep{xiong2024iterative}.
Our framework can seamlessly incorporate a progress reward by replacing $r(\Mat{y} | \Mat{x})$ with this generalized version $R(\Mat{y} | \Mat{x})$.

\section{Deferred Theory}
\label{app:theory}

\subsection{Gradient Derivation of $\cL$}
\label{app:grad_derive}

We derive the gradients of Eq. \ref{eqn:obj} summarized as the following proposition.
We consider the generalized reward function which is written as a summation over rewards defined over different subsequences (Sec. \ref{sec:generalize_reward}).

\begin{proposition}
The gradient of loss function $\cL(\Mat{y}) = -\lambda \sum_{i=1}^{|\Mat{y}|} r(\Mat{y}_{\le i} | \Mat{x}) - \log \pi_{LLM}(\Mat{y} | \Mat{x})$ takes the form of $\frac{\partial \cL(\Mat{y})}{\partial \Mat{y}_l} = \Mat{\delta}_{prefix} + \Mat{\delta}_{postfix} + \lambda \Mat{\delta}_{reward}$ where:
\begin{align}
\Mat{\delta}_{prefix} &= - \log \Cat\left(\pi_{LLM}(\cdot | \Mat{y}_{\le l-1}, \Mat{x})\right), \\
\Mat{\delta}_{postfix} &= - \sum_{i=l+1}^{|\Mat{y}|}\frac{\partial \log \Cat\left(\pi_{LLM}(\cdot | \Mat{y}_{\le i-1}, \Mat{x})
\right)}{\partial \Mat{y}_l} \Mat{y}, \\
\Mat{\delta}_{reward} &= - \sum_{i = l}^{|\Mat{y}|} \frac{\partial r(\Mat{y}_{\le i} | \Mat{x})}{\partial \Mat{y}_l}.
\end{align}
\end{proposition}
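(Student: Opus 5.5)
The plan is to write the loss in a form where every token $\Mat{y}_i$ appears as a one-hot (or relaxed) vector, and then differentiate term by term with respect to a fixed $\Mat{y}_l$. Using the sequential factorization of Sec.~\ref{sec:dto},
\begin{align*}
\cL(\Mat{y}) = -\lambda \sum_{i=1}^{|\Mat{y}|} r(\Mat{y}_{\le i} | \Mat{x}) - \sum_{i=1}^{|\Mat{y}|} \Mat{y}_i^\top \Mat{\ell}_i, \qquad \Mat{\ell}_i := \log \Cat\left(\pi_{LLM}(\cdot | \Mat{y}_{\le i-1}, \Mat{x})\right).
\end{align*}
Here $\Mat{\ell}_i$ is regarded as a differentiable function of the prefix tokens $\Mat{y}_1, \dots, \Mat{y}_{i-1}$. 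This uses the standing assumption stated before Theorem~\ref{thm:fokker_plank_ppo} that the LLM input domain extends to the ambient space $\real^{|\V|}$, so that the embedding lookup is linear in $\Mat{y}_i$. In particular, $\Mat{\ell}_i$ does not depend on $\Mat{y}_i$ itself, nor on any later token, because the model is causal.

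For the likelihood part, I split the sum over $i$ into three ranges relative to $l$.

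\textbf{Case $i<l$.} The term $\Mat{y}_i^\top \Mat{\ell}_i$ involves neither $\Mat{y}_l$ nor any token after position $i-1$ inside $\Mat{\ell}_i$, so its derivative vanishes.

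\textbf{Case $i=l$.} Since $\Mat{\ell}_l$ depends only on $\Mat{y}_{\le l-1}$, the product rule leaves only $\partial(\Mat{y}_l^\top \Mat{\ell}_l)/\partial \Mat{y}_l = \Mat{\ell}_l$. With the minus sign this gives $\Mat{\delta}_{prefix}$.

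\textbf{Case $i>l$.} Here $\Mat{y}_i$ is held fixed and $\Mat{\ell}_i$ depends on $\Mat{y}_l$ through attention. The derivative is the Jacobian $\partial \Mat{\ell}_i/\partial \Mat{y}_l$ contracted against $\Mat{y}_i$. Summing over $i=l+1,\dots,|\Mat{y}|$ with the minus sign gives $\Mat{\delta}_{postfix}$. The statement's notation ``$\cdots \Mat{y}$'' should be read as contracting the $i$-th Jacobian with the $i$-th token $\Mat{y}_i$, and I will make this explicit.

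For the reward part, each $r(\Mat{y}_{\le i} | \Mat{x})$ depends only on tokens up to $i$. Its derivative with respect to $\Mat{y}_l$ is therefore zero for $i<l$, so only the terms with $i \ge l$ survive. This gives $-\lambda \sum_{i\ge l} \partial r(\Mat{y}_{\le i}|\Mat{x})/\partial \Mat{y}_l = \lambda \Mat{\delta}_{reward}$. Adding the three contributions yields the claimed decomposition.

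The main obstacle is bookkeeping rather than analysis. First, I need to justify that $\Mat{\ell}_i$ is independent of $\Mat{y}_{\ge i}$ (causal masking) and differentiable in the relaxed tokens. Second, I need to fix the transpose and contraction conventions so that $\Mat{\delta}_{postfix}$ is a vector in $\real^{|\V|}$, i.e.\ $(\partial \Mat{\ell}_i/\partial \Mat{y}_l)^\top \Mat{y}_i$. To handle this, I would write everything in index form, $\partial/\partial y_{l,k}$ of $\sum_j y_{i,j}\ell_{i,j}$, and only then return to matrix notation.

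Finally, I note the outcome-reward special case. There $r(\Mat{y}_{\le i}|\Mat{x})=0$ for $i<|\Mat{y}|$, and the formula collapses to the main-text expression in Sec.~\ref{sec:grad}.
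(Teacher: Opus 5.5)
Your proposal is correct and follows the paper's own argument. The paper also splits $\cL$ into the reward terms with $i<l$ versus $i\ge l$ and the likelihood terms with $i<l$, $i=l$, $i>l$, and differentiates each piece using causality of $\pi_{LLM}$ and of the partial rewards. Your contraction $(\partial \Mat{\ell}_i/\partial \Mat{y}_l)^\top \Mat{y}_i$ and your handling of the $-\lambda$ factor are correct, and cleaner than the paper's last lines, which contract against $\Mat{y}_l$ instead of $\Mat{y}_i$ in $\Mat{\delta}_{postfix}$ and drop the $-\lambda$ when equating the $i\ge l$ reward block's derivative with $\Mat{\delta}_{reward}$.
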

\begin{proof}
The proof is done by elementary derivative calculation. First of all, we write down the expanded expression of the loss function:
\begin{align}
\cL(\Mat{y}) &= -\lambda \sum_{i=1}^{|\Mat{y}|} r(\Mat{y}_{\le i} | \Mat{x}) - \sum_{i=1}^{|\Mat{y}|} \log \pi_{LLM}(\Mat{y}_i | \Mat{y}_{\le i-1}, \Mat{x}) \\
&= -\lambda \sum_{i=1}^{|\Mat{y}|} r(\Mat{y}_{\le i} | \Mat{x}) - \sum_{i=1}^{|\Mat{y}|} \sum_{v \in [|\V|]} \Mat{y}_{i, v} \log \pi_{LLM}(\Mat{e}_v | \Mat{y}_{\le i-1}, \Mat{x})
\end{align}
For a specific token index $l \in [|\Mat{y}|]$, we decompose the loss into five components:
\begin{align}
\cL(\Mat{y}) &= \underbrace{-\lambda \sum_{i=1}^{l-1} r(\Mat{y}_{\le i} | \Mat{x})}_{\Phi_1} \underbrace{-\lambda \sum_{i=l}^{|\Mat{y}|} r(\Mat{y}_{\le i} | \Mat{x})}_{\Phi_2} - \underbrace{\sum_{i=1}^{l-1} \sum_{v \in [|\V|]} \Mat{y}_{i, v} \log \pi_{LLM}(\Mat{e}_v | \Mat{y}_{\le i-1}, \Mat{x})}_{\Pi_{1}} \\
&\quad  \underbrace{- \sum_{v \in [|\V|]} \Mat{y}_{l, v} \log \pi_{LLM}(\Mat{e}_v | \Mat{y}_{\le l-1}, \Mat{x})}_{\Pi_2} - \underbrace{\sum_{i=l+1}^{|\Mat{y}|} \sum_{v \in [|\V|]} \Mat{y}_{i, v} \log \pi_{LLM}(\Mat{e}_v | \Mat{y}_{\le i-1}, \Mat{x})}_{\Pi_3},
\end{align}
where $\frac{\partial \Phi_1}{\partial \Mat{y}_l} = 0$ and $\frac{\partial \Pi_1}{\partial \Mat{y}_l} = 0$ because they do not involve $\Mat{y}_l$.
$\Pi_2$ only depends on $\Mat{y}_l$ through the term $\Mat{y}_{l,v}$ while $\Pi_3$ only depends on $\Mat{y}_l$ via $\log \pi_{LLM}(\Mat{e}_v | \Mat{y}_{\le i-1}, \Mat{x})$.
Next, we compute the gradients for $\Phi_2$, $\Pi_2$, and $\Pi_3$, respectively.
\begin{align}
\Mat{\delta}_{reward} &= \frac{\partial \Phi_2}{\partial \Mat{y}_l} = \sum_{i = l}^{|\Mat{y}|} \frac{\partial r(\Mat{y}_{\le i} | \Mat{x})}{\partial \Mat{y}_l}, \\
\Mat{\delta}_{prefix} &= \frac{\partial \Pi_2}{\partial \Mat{y}_l} = [-\log \pi_{LLM}(\Mat{e}_1 | \Mat{y}_{\le i-1}, \Mat{x}), \cdots, -\log \pi_{LLM}(\Mat{e}_N | \Mat{y}_{\le i-1}, \Mat{x})]^\top \\ &= - \log \Cat\left(\pi_{LLM}(\cdot | \Mat{y}_{\le l-1}, \Mat{x})\right), \\
\Mat{\delta}_{postfix} &= \frac{\partial \Pi_3}{\partial \Mat{y}_l} = - \sum_{i=l+1}^{|\Mat{y}|} \sum_{v \in [|\V|]} \frac{\partial \log \pi_{LLM}(\Mat{e}_v | \Mat{y}_{\le i-1}, \Mat{x})}{\partial \Mat{y}_l} \Mat{y}_{l, v} \\
&= - \sum_{i=l+1}^{|\Mat{y}|}\frac{\partial \log \Cat\left(\pi_{LLM}(\cdot | \Mat{y}_{\le i-1}, \Mat{x})\right)}{\partial \Mat{y}_l} \Mat{y}_l,
\end{align}
as desired.
\end{proof}

\begin{remark}
Our proposed DTO fundamentally differs from previous works that utilize gradients for controlled generation \citep{qin2022cold, kumar2021controlled, kumar2022gradient, mireshghallah2022mix, liu2023bolt}, where $\Mat{\delta}_{postfix}$ is often detached from the computational graph, and only prior context is used to guide subsequent token prediction. 
\end{remark}

\subsection{Proof of Theorem \ref{thm:fokker_plank_ppo}}
\label{app:prove_wass_grad}
\begin{theorem}
[Restatement of Theorem \ref{thm:fokker_plank_ppo}]
Suppose $\{\rho^{t}\}_{t \ge 0}$ denotes the Wasserstein gradient flow minimizing Eq. \ref{eqn:ppo} in the distribution space with boundary conditions $\rho^{0} = \pi_{LLM}$ and $\rho^{\infty} = \rho^* = \argmin_{\rho}\cL_{PPO}(\rho)$.
Then we can draw samples from $\rho^*$ by first initializing $\Mat{x}^{0} \sim \pi_{LLM}$ and simulating a trajectory $\{\Mat{x}^t\}_{t \ge 0}$ following the stochastic gradient flow of Eq. \ref{eqn:obj}: $\frac{d \Mat{x}^t}{d t} = -\nabla \cL(\Mat{x}^t) + \sqrt{2} \Mat{\epsilon}_t$, where $\{\Mat{\epsilon}_t \in \gauss(\Mat{0}, \Mat{I})\}_{t \ge 0}$ are Brownian motions.
\end{theorem}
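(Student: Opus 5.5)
Our plan is to identify the Wasserstein gradient flow of $\cL_{PPO}$ with the Fokker--Planck equation of the Langevin dynamics driven by $\cL$. We then use the standard fact that the law of a diffusion solves its Fokker--Planck equation. Throughout, we treat $\Mat{x}$ as living in the continuous ambient space, as the statement assumes. We view $\pi_{LLM}$ as a smooth positive density there, and write $\pi_{LLM} \propto e^{-V}$ with $V = -\log \pi_{LLM}$, so that $\cL(\Mat{x}) = -\lambda r(\Mat{x}) + V(\Mat{x})$.

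\textbf{Step 1: rewrite the objective.} We expand the KL term:
\begin{align*}
\cL_{PPO}(\rho) = \int \rho(\Mat{x}) \left( -\lambda r(\Mat{x}) - \log \pi_{LLM}(\Mat{x}) \right) d\Mat{x} + \int \rho \log \rho \, d\Mat{x} = \int \rho \, \cL \, d\Mat{x} + \int \rho \log \rho \, d\Mat{x}.
\end{align*}
This is a potential energy with potential $\cL$ plus negative entropy. Equivalently, $\cL_{PPO}(\rho) = \KL(\rho \Vert \rho^*) - \log Z$, where $\rho^* \propto \pi_{LLM}\, e^{\lambda r} = e^{-\cL}$ and $Z = \int e^{-\cL}$. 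This gives the closed form of the minimizer and shows it is unique. We need $Z < \infty$, for example by assuming $r$ is bounded.

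\textbf{Step 2: write the flow.} The first variation is $\frac{\delta \cL_{PPO}}{\delta \rho} = \cL + \log \rho + 1$. By the Jordan--Kinderlehrer--Otto characterization, the Wasserstein gradient flow is
\begin{align*}
\partial_t \rho^t = \nabla \cdot \left( \rho^t \nabla(\cL + \log \rho^t) \right) = \nabla \cdot (\rho^t \nabla \cL) + \Delta \rho^t, \qquad \rho^0 = \pi_{LLM}.
\end{align*}

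\textbf{Step 3: match with the SDE.} We read the dynamics in the statement as the Itô SDE $d\Mat{x}^t = -\nabla \cL(\Mat{x}^t)\, dt + \sqrt{2}\, dW_t$ with $\Mat{x}^0 \sim \pi_{LLM}$. Applying Itô's formula to a test function $\varphi$ gives $\frac{d}{dt}\mean[\varphi(\Mat{x}^t)] = \mean[-\nabla\cL \cdot \nabla \varphi + \Delta \varphi]$. Integrating by parts, the law $q^t$ of $\Mat{x}^t$ satisfies the same Fokker--Planck equation with the same initial condition. Uniqueness of solutions to this linear parabolic PDE, under local Lipschitz $\nabla \cL$ and a growth condition preventing explosion, then gives $q^t = \rho^t$ for all $t$.

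\textbf{Step 4: pass to the limit.} We show $\rho^t \to \rho^*$ as $t \to \infty$. Along the flow, $\frac{d}{dt}\KL(\rho^t \Vert \rho^*) = -\int \rho^t \lVert \nabla \log(\rho^t/\rho^*) \rVert^2$. Under a log-Sobolev inequality for $\rho^*$, this yields exponential convergence. It follows that $\mathrm{Law}(\Mat{x}^t) \to \rho^*$, so the simulated trajectory samples $\rho^*$ asymptotically, which is the claimed statement.

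\textbf{Main obstacle.} The real difficulty is not the formal computation but the regularity needed to make it rigorous. First, $\pi_{LLM}$ must be extended to a smooth density on the ambient space with $\nabla \log \pi_{LLM}$ locally Lipschitz. Second, $r$ must be bounded and smooth. Third, $e^{-\cL}$ must satisfy a log-Sobolev inequality, for instance via Holley--Stroock perturbation if $V$ is strongly convex at infinity and $r$ is bounded. We would state these as standing assumptions and keep the rest at the level of the standard Langevin/Fokker--Planck correspondence. We would also note that the boundary condition $\rho^\infty = \rho^*$ in the statement is exactly what Step 4 delivers.
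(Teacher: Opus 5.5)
Your proposal is correct and follows the same route as the paper: compute the first variation $\cL + \log\rho + 1$, write the Wasserstein gradient flow as the Fokker--Planck equation of the Langevin dynamics $d\Mat{x}^t = -\nabla\cL(\Mat{x}^t)\,dt + \sqrt{2}\,dW_t$, identify $\rho^t$ with the law of $\Mat{x}^t$, and pass to $t\to\infty$. Your additions are sound: the closed form $\rho^*\propto \pi_{LLM}e^{\lambda r}$, the It\^o/uniqueness justification, and the log-Sobolev convergence argument, where the paper only cites the Fokker--Planck correspondence and takes $\rho^\infty=\rho^*$ as a hypothesis; also, your Step 2 sign, $\partial_t\rho^t = \nabla\cdot(\rho^t\nabla\cL) + \Delta\rho^t$, is the correct one, whereas the paper's displayed continuity equation has the opposite sign and is only corrected when it writes down the SDE.
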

\begin{proof}
First of all, we derive the Wasserstein gradient flow for $\rho^{t}$ on $\mathbb{W}_2(\real^{L_x \times N})$ under the functional cost $\cL_{PPO}$ \citep{chen2018unified}:
\begin{align}
\label{eqn:wass_flow}
\partial_t \rho^t = -\nabla_{\mathbb{W}} \cL_{PPO}(\rho^t) \quad \Rightarrow \quad \partial_t \rho^t + \nabla_{\Mat{x}} \cdot \left(\rho^t \nabla_{\Mat{x}} \frac{\delta \cL_{PPO}}{\delta \rho^t}\right) = 0,
\end{align}
where $\frac{\delta \cL_{PPO}}{\delta \rho^t}$ denotes the first variation of $\cL_{PPO}$ in terms of $\rho_t$, and the $\nabla \cdot $ is the divergence operator.
We derive the first variation as below: 
\begin{align}
\frac{\delta \cL_{PPO}}{\delta \rho^t} &= \frac{\delta}{\delta \rho^t} \left[\int -\lambda r(\Mat{x}) + \log \frac{\rho^t(\Mat{x})}{\pi_{LLM}(\Mat{x})} \rho^t(\Mat{x}) d\Mat{x} \right] \\
&= -\lambda r(\Mat{x}) + \log \rho^t(\Mat{x}) - \log \pi_{LLM}(\Mat{x}) + 1.
\end{align}
The gradient of $\frac{\delta \cL_{PPO}}{\delta \rho^t}$ can be expressed as:
\begin{align}
\nabla_{\Mat{x}} \frac{\delta \cL_{PPO}}{\delta \pi_t} = \nabla_{\Mat{x}} \left( -\lambda r(\Mat{x}) + \log \rho^t(\Mat{x}) - \log \pi_{LLM}(\Mat{x}) \right).
\end{align}
Now we substitute the above equations into Eq. \ref{eqn:wass_flow} and find the following partial differential equation of $\rho^t$:
\begin{align}
\partial_t \rho^t + \nabla_{\Mat{x}} \cdot \left[\rho^t \nabla_{\Mat{x}} \left( -\lambda r(\Mat{x}) + \log \rho^t(\Mat{x}) - \log \pi_{LLM}(\Mat{x}) \right) \right] &= 0 \\
\partial_t \rho^t + \nabla_{\Mat{x}} \cdot \left[\rho^t \left( -\lambda \nabla_{\Mat{x}} r(\Mat{x}) - \nabla_{\Mat{x}} \log \pi_{LLM}(\Mat{x}) \right)\right] + \sum_{i,j=1}^{L_x,N}\frac{\partial}{\partial \Mat{x}_{i,j}}\left(\nabla_{\Mat{x}} \log \rho^t(\Mat{x})\right) &= 0 \\
\partial_t \rho^t + \nabla_{\Mat{x}} \cdot \left[\rho^t \left( -\lambda \nabla_{\Mat{x}} r(\Mat{x}) - \nabla_{\Mat{x}} \log \pi_{LLM}(\Mat{x}) \right)\right] + \Delta \rho^t(\Mat{x}) &= 0,
\end{align}
where $\Delta$ means the Laplacian operator $\sum_{ij} \frac{\partial^2}{\partial \Mat{x}_{ij}^2}$.
By Fokker-Plank equation \citep{maoutsa2020interacting}, we obtain a velocity field for particles $\Mat{x}^t \in \real^{L_x \times N}$:
\begin{align}
\frac{d \Mat{x}^t}{dt} = -\lambda \nabla_{\Mat{x}} r(\Mat{x}) - \nabla_{\Mat{x}} \log \pi_{LLM}(\Mat{x}) + \sqrt{2} \Mat{\epsilon}_t = -\nabla_{\Mat{x}} \cL(\Mat{x}^t) + \sqrt{2} \Mat{\epsilon}_t,
\end{align}
where $\{\Mat{\epsilon}_t\}_{t \ge 0}$ denotes a Brownian motion \citep{oksendal2003stochastic}).

Conversely, if $\Mat{x}^t$ follows the above dynamics starting from the initial distribution $\Mat{x}^t \sim \rho^0$, then the density function $\rho^t$ of $\Mat{x}^t$ follows the time evolution in Eq. \ref{eqn:wass_flow} (see \citet{maoutsa2020interacting}).
Since we assume the limiting condition $\rho^t \rightarrow \rho^*$, we conclude that $\Mat{x}^t \sim \rho^*$ in distribution as $t \rightarrow \infty$.
\end{proof}

\subsection{Justification of Confidence-Based Token Selection}
\label{app:pitfall_sm}

We re-parameterize the policy as a categorical distribution through a softmax function to ensure differentiability.
In Sec. \ref{sec:accelerate}, we mention that we can skip optimizing tokens whose logits is over-confident as gradient descent is unlikely to significantly change its resultant distribution.
We provide theoretical evidence for this argument.

Without loss of generality, we consider the scenario where we only optimize a single token $\Mat{x} \in \real^{|\V|}$ within a vocabulary $\V$.
We initialize a logit vector $\Mat{z} \in \real^{N}$, then apply the softmax transformation to obtain the corresponding categorical distribution:
\begin{align}
\Mat{x}_{i} = \frac{\exp(\Mat{z}_i)}{\sum_{j=1}^{|\V|} \exp(\Mat{z}_j)}, \quad \forall i \in [|\V|].
\end{align}
The loss function is defined over $\Mat{x}$ and by chain rule, we derive the gradient w.r.t $\Mat{z}_i$ for $i \in [|\V|]$:
\begin{align}
\frac{\partial \cL}{\partial \Mat{z}_i} = \frac{\partial \cL}{\partial \Mat{x}} \frac{\partial \Mat{x}}{\partial \Mat{z}_i} = (\diag(\Mat{x}) - \Mat{x}\Mat{x}^\top)_{i} \frac{\partial \cL}{\partial \Mat{x}} = \Mat{x}_i \left(\left[\frac{\partial \cL}{\partial \Mat{x}}\right]_i -  \Mat{x}^\top \frac{\partial \cL}{\partial \Mat{x}}\right),
\end{align}
where we use the fact that the Jacobian matrix of softmax function is $\diag(\Mat{x}) - \Mat{x}\Mat{x}^\top$.
The derivation above indicates that the gradient magnitude for the $i$-th logit is proportional to its corresponding post-softmax probability.
When $\Mat{x}_i$ is small at the initialization, then its underlying representation $\Mat{z}_i$ cannot be updated effectively.
This limitation underscores the necessity of skipping tuning $\Mat{x}$ with high confidence throughout the decoding stage.

\section{More on Experiment}\label{app:exp}

\subsection{Experiment Details}
We used a temperature of $0.5$ and a top-p of $0.95$, and set the maximum generation length to 1024 for AMC and MATH-500, and 3072 for AIME,  for all baselines and our methods.
We report the average performance across $4$ independent runs on smaller datasets such as AMC, AIME-24, and AIME-25.
For SFT experiments, we randomly sampled a 10k subset from the Open-thoughts dataset~\citep{guha2025openthoughts}.
For GRPO experiments, we adopted a random 35k subset from the Numina-Math dataset~\citep{numina_math_datasets}.
Exclusively for \name, we set $\epsilon_{ent} = 0.25$, $\epsilon_{grad} = 8$, learning rate to $0.01$, and number of iterations to $20$ in all experiments. We use Skywork-Reward-V2-Qwen3-4B~\citep{liu2025skywork} as the reward model for Qwen family and Skywork-Reward-V2-Llama-3.1-8B~\citep{liu2025skywork} as the reward model for Llama model.
Below, we list a more comprehensive summary of the experiment setups.

\begin{table}[h!]
\centering
\caption{Summary of Experimental Settings.}
\label{tab:exp_settings}
\resizebox{\linewidth}{!}{
\begin{tabular}{ll}
\toprule
\textbf{Setting} & \textbf{Value} \\
\midrule
\multicolumn{2}{l}{\textbf{Generation Hyperparameters}} \\
Temperature & $0.5$ \\
Top-p & $0.95$ \\
Max Generation Length (AMC, MATH-500) & 1024 \\
Max Generation Length (AIME) & 3072 \\
\midrule
\multicolumn{2}{l}{\textbf{Evaluation}} \\
Independent Runs (MATH-500) & 1 \\
Independent Runs (AMC, AIME-24, AIME-25) & 4 \\
\midrule
\multicolumn{2}{l}{\textbf{Training Data Subsets}} \\
SFT & 10k random sample from Open-thoughts ~\citep{guha2025openthoughts} \\
GRPO & 35k random sample from Numina math~\citep{numina_math_datasets} \\
\midrule
\multicolumn{2}{l}{\textbf{\name Hyperparameters}} \\
$\epsilon_{ent}$ & $0.25$ \\
$\epsilon_{grad}$ & $8$ \\
Learning Rate & $0.01$ \\
Number of Iterations & $20$ \\
Optimizer & Adam-W \\
LR Scheduler & Cosine \\
Min LR & $0.001$ \\
\midrule
\multicolumn{2}{l}{\textbf{Reward Models}} \\
For Qwen Family & Skywork-Reward-V2-Qwen3-4B~\citep{liu2025skywork} \\
For Llama Model & Skywork-Reward-V2-Llama-3.1-8B~\citep{liu2025skywork} \\
\bottomrule
\end{tabular}}
\end{table}

\subsection{Effectiveness of Acceleration Techniques}

In Sec. \ref{sec:accelerate}, we proposed three major techniques to improve the decoding efficiency of \name.
In this section, we quantify the contribution of each technique to the overall speedup.
Instead of ablating them in running time (since removing any one of these components causes the algorithm to run in an prohibitive amount of time, often more than several hours per sample), we monitor how each technique reduces the cost of the specific stage to which it is applied.
As a result, we find our gradient-caching mechanism bypasses more than 63.8\% of the model calls in parallel forms required for gradient acquisition, and trajectory reuse eliminates more than 74.1\% of autoregressive model calls.
Both techniques exploit the sparsity of token updates.
The token-selection strategy further complements these components by skipping the full optimization procedure when appropriate; we observe that it effectively avoids 89.2\% of token-optimization steps across the sequence.

\subsection{Wall-clock Time Benchmarking under Idealized Settings}

\begin{wraptable}{R}{0.5\textwidth}
\centering
\caption{Comparison of wall-clock execution time for 83 prompts on the AMC dataset.}
\label{tab:wall_clock_time}
\resizebox{\linewidth}{!}{
\begin{tabular}{lccc|c}
\toprule
\textbf{Method} & \textbf{Gen. (s)} & \textbf{Optim. (s)} & \textbf{Total (s)} & \textbf{FLOPs}\\ \midrule
BoN ($N=8$) & 136.1 & --- & 136.1 & $9.54 \times 10^{15}$\\
\textbf{\name} & 106.2 & 45.9 & 152.1 & $2.46 \times 10^{17}$\\ \bottomrule
\end{tabular}}
\end{wraptable}
In this section, we measure and compare the wall-clock running time of our method with a strong baseline, BoN. We consider an idealized deployment setting where dynamic serving frameworks (e.g., vLLM) and batching are fully utilized, so that all prompts are processed in parallel. Under this assumption, both generation and optimization requests can be executed as batched operations.
To estimate the running time in this setting, we first record execution traces of the reasoning process for both BoN and \name.
The trace includes statistics such as the number of generated tokens per request, the number of gradient computations, and the number of tokens involved in optimization.
We then group and replay these traces to simulate parallel generation and optimization.
The recorded statistics are used to reconstruct the execution process.
The simulation sends generation requests to a vLLM server and performs gradient computation over batched sequences.
We measure the corresponding wall-clock running time together with the computational FLOPs.
The experiment was conducted on the AMC dataset using the Qwen-2.5-math-Instruct-7B model running on 8x 140GB NVIDIA H200 GPUs.
The results are demonstrated in Tab. \ref{tab:wall_clock_time}.
As shown in our benchmarks, for a batch of 83 prompts, the BoN baseline ($N=8$) requires 136.1 seconds of total execution time. In comparison, our approach completes the generation phase in 106.2 seconds, with an additional 45.9 seconds of overhead for the iterative optimization process (calculated as the total optimization time distributed across 8 GPUs). This results in a total wall-clock time of 152.1 seconds. 
We further analyze the computational efficiency by comparing the total FLOPs. While \name\ performs significantly more operations than the BoN baseline ($2.46 \times 10^{17}$ vs. $9.54 \times 10^{15}$ FLOPs), the total wall-clock time remains comparable.
This is because hardware utilization (TFLOPS/s) during the DTO phase -- where attention is executed in parallel -- is significantly higher than during the auto-regressive phase, where attention must be computed sequentially.

\subsection{A Simplified Example on GSM8K}\label{app:example}

The problem asks:
\begin{quote}
Josh buys a house for $\$80,000$ and then puts in $\$50,000$ in repairs. This increased the value of the house by $150\%$. How much profit did he make?
\end{quote}

\noindent\textbf{Greedy Decoding Baseline (Failure Analysis)}

The original output reads:
\begin{verbatim}
Josh buys a house for $80,000 and then puts in $50,000 in repairs. 
The value of the house increased by 150%
house is 80,000 * 1.5 = $120,000. Josh sold the house for 120,000, 
so he made a profit of 120,000 - 80,000 - 50,000 = -10,000. The 
answer is -10,000.
\end{verbatim}

This greedy decoding baseline fails by interpreting "increased by" as simple multiplication ($80,000 \times 1.5$), resulting in:
\[
\text{Incorrect New Value} = \$80,000 \times 1.5 = \$120,000
\]
and an incorrect profit of:
\[
\text{Incorrect Profit} = \$120,000 - \$80,000 - \$50,000 = -\$10,000
\]

\noindent\textbf{DTO-Based Token Revision (Correction)}

Our method corrects this error through DTO-based refinement.

Initial Input and Refinement Round 1 (Operator Correction)

\begin{itemize}
    \item Initial Output Segment: $\text{``\dots, so the new value of the house is } 80,000 \times 1.5 = \$120,000."$
    \item The algorithm performs refinement at the multiplication token ``$\times$''.
    \item Through gradient descent, the model updates the multiplication token ``$\times$'' to the addition symbol ``$+$'', revising the logical structure to: $\text{``\dots, so the new value of the house is } 80,000 + \dots"$
\end{itemize}

Refinement Round 2 (Calculation Correction)

\begin{itemize}
    \item Subsequently, when calculating the increment, the model initially outputs $80,000 \times 1.5 = \$120,000$. The calculation is incorrect because the model ignores the preceding $80,000 +$.
    \item The algorithm flags the token $\mathbf{120}$ (representing $\$120,000$) due to a low score.
    \item Gradient optimization identifies $\mathbf{200}$ as the top replacement, correcting the calculation for the new value:
    \[
    \text{Correct New Value} = \$80,000 + (\$80,000 \times 1.5) = \$80,000 + \$120,000 = \$200,000
    \]
\end{itemize}

Final Correct Derivation

The model completes the string to form ``$\$200,000$'', allowing it to derive the correct profit:
\[
\text{Correct Profit} = \$200,000 - \$80,000 - \$50,000 = \$70,000
\]

\noindent\textbf{Conclusion}

In conclusion, after two rounds of iterative refinement, the final output becomes:
\begin{verbatim}
Josh buys a house for $80,000 and then puts in $50,000 in repairs. 
The value of the house increased by 150%
house is 80,000 + 80,000 * 1.5 = $200,000. Josh sold the house for
200,000, so he made a profit of 200,000 - 80,000 - 50,000 = 70,000. 
The answer is 70,000.
\end{verbatim}

\end{document}